\newtheorem{jci-assumption}{JCI assumption}
\title[Local Constraint-Based Causal Discovery under Selection Bias]{Local Constraint-Based Causal Discovery under Selection Bias}
\newcommand{\Prob}{\mathbb{P}}
\DeclareMathOperator*{\CI}{{\,\perp\mkern-12mu\perp\,}}
\DeclareMathOperator*{\nCI}{{\,\not\mkern-1mu\perp\mkern-12mu\perp\,}}
\newcommand\given{\,|\,}
\newcommand\mathbfsc[1]{\text{\normalfont\scshape#1}}
\newcommand{\an}[1]{\mathbfsc{an}(#1)}
\newcommand{\de}[1]{\mathbfsc{de}(#1)}
\newcommand{\pa}[1]{\mathbfsc{pa}(#1)}
\newcommand{\scc}[1]{\mathbfsc{sc}(#1)}
\newcommand{\inter}{\mathrm{do}}
\newcommand{\algo}[1]{\textbf{\texttt{#1}}}
\newcommand{\lcd}{\algo{LCD}\xspace}
\newcommand{\yst}{\algo{YSt}\xspace}
\newcommand{\yste}{\algo{YSt-Ext}\xspace}
\newcommand{\icp}{\algo{ICP}\xspace}
\newcommand{\mathsvar}{\bm{S}}
\newcommand{\dsel}{$\mathcal{D}_{S}$\xspace}
\newcommand{\dnosel}{$\mathcal{D}_{\emptyset}$\xspace}
\newcommand{\ot}{\mathrel{\leftarrow}}
\newcommand{\oto}{\mathrel{\leftrightarrow}}
\newcommand{\ots}{\mathrel{{\leftarrow\mkern-11mu\ast}}}
\newcommand{\sto}{\mathrel{{\ast\mkern-11mu\to}}}
\newcommand{\sts}{\mathrel{{\ast\mkern-11mu\relbar\mkern-9mu\relbar\mkern-11mu\ast}}}
\tikzstyle{var}=[circle,draw=black,fill=white,thin,minimum size=18pt,inner sep=0pt]
\tikzstyle{varh}=[circle,draw=lightgray,fill=white,thin,minimum size=18pt,inner sep=0pt,dashed]
\tikzstyle{vartarget}=[circle,draw=black,fill=lightgray,thin,minimum size=18pt,inner sep=0pt]
\tikzstyle{varcon}=[rectangle,draw=black,fill=white,thin,minimum size=18pt,inner sep=0pt]
\tikzstyle{varsel}=[circle,draw=black,fill=lightgray,double,thin,minimum size=18pt,inner sep=0pt]
\tikzstyle{arr}=[->,>=stealth',draw=black,thick]
\tikzstyle{arrh}=[->,>=stealth',draw=lightgray,thick,dashed]
\tikzstyle{biarr}=[<->,>=stealth',draw=black,fill=black,thick]
\tikzstyle{noarr}=[-,>=stealth',draw=black,fill=black,thick]
\tikzstyle{biarrh}=[<->,>=stealth',draw=lightgray,fill=lightgray,thick]
\tikzstyle{arr}=[->,>=stealth',draw=black,thick]
\tikzstyle{arr-ch}=[o->,>=stealth',draw=black,thick]
\tikzstyle{arr-cc}=[o-o,>=stealth',draw=black,thick]
\tikzstyle{arr-hc}=[<-o,>=stealth',draw=black,thick]
\tikzstyle{arr-tc}=[-o,>=stealth',draw=black,thick]
\tikzstyle{arr-ct}=[o-,>=stealth',draw=black,thick]
\tikzstyle{arr-tt}=[-,>=stealth',draw=black,thick]
\newcommand\twofigurewidth{0.4\textwidth}
\begin{document}

\maketitle

\begin{abstract}
   We consider the problem of discovering causal relations from independence constraints selection bias in addition to confounding is present. While the seminal FCI algorithm is sound and complete in this setup, no criterion for the causal interpretation of its output under selection bias is presently known. We focus instead on local patterns of independence relations, where we find no sound method for only three variable that can include background knowledge. Y-Structure patterns~\citep{mani2006theoretical,mooij2015empirical} are shown to be sound in predicting causal relations from data under selection bias, where cycles may be present. We introduce a finite-sample scoring rule for Y-Structures that is shown to successfully predict causal relations in simulation experiments that include selection mechanisms. On real-world microarray data, we show that a Y-Structure variant performs well across different datasets, potentially circumventing spurious correlations due to selection bias.
\end{abstract}

\begin{keywords}
  causal discovery, causal inference, observational and experimental data, selection bias
\end{keywords}

\section{Introduction}

The discovery of causal relations from data is central to many disciplines in science such as biology, economics and psychology. Information about the true underlying causal relations is fundamental in predicting the effects of new unseen interventions, and direct experimentation is often expensive, unethical or unfeasible. Algorithms for discovering causal relations have been developed for data with a variety of challenging attributes, for example allowing for latent confounding or cyclic causal relations to be present~\citep{spirtes1999algorithm,mooij2020constraint}.

One of the more demanding properties of data is the presence of selection bias: the selective exclusion of samples in the data-generating process. The biased data suffers from spurious correlations that can severely hinder methods for statistical and causal inference. In practice the possibility of a selection mechanism is often dismissed as a hypothesis. Consider the following example. 

\begin{example}\textbf{(Gene Regulatory Network)}\label{example:gene_rct}
   Genes in a \emph{gene regulatory network} (GRN) influence other genes through a process of gene expression. In a {microarray experiment}, cells of an organism are grown for multiple generations to produce sufficient genetic material, after which expression levels are measured for all genes. 
   Due to exponential growth, the fittest cells quickly dominate the population, and the measured expressions reflect those of the fittest subpopulation while mostly ignoring those with a slower growth rate.
   
   A toy model for only three genes $A$,$B$ and $C$ is depicted in Fig.~\ref{fig:intro_example}, where the unobserved selection bias variable $S$ represents the survivability or fitness of the cells. Here, fitness $S$ is a direct effect of the expressions of genes $A$ and $B$, representing multiple causal relations that exist concurrently as a fall-back mechanism in a {redundant process}. 
   
   The experiment mostly captures the expressions for genes of the fittest cells, effectively conditioning on the unmeasured $S$ and introducing spurious correlations between $A$ and both $B$ and $C$ in the data. This remains true in post-interventional samples, originating from the knock-out of gene $A$. If selection bias is disregarded, one could conclude from this data that $A$ is an (indirect) cause of both $B$ and $C$.
   Crucially, these learned spurious causal relations would not represent the true molecular interactions and may transfer poorly to other types of experimental data gathered on the GRN.
\end{example}

\begin{figure}\centering
   \begin{tikzpicture}
      \node[varsel] (S) at (0,0) {$S$};
      \node[var] (C) at (-1,1) {$A$};
      \node[var] (D) at (1,1) {$B$};
      \node[var] (E) at (3,1) {$C$};
      \draw[arr] (C) edge (S);
      \draw[arr] (D) edge (S);
      \draw[arr] (E) edge (D);
      \node[] (empty) at (3,0) {};
   \end{tikzpicture}
   \caption{\textbf{(Example)} Causal graph accompanying Example.~\ref{example:gene_rct}, where selection bias is represented by the unobserved and conditioned variable $S$.}
   \label{fig:intro_example}
\end{figure}
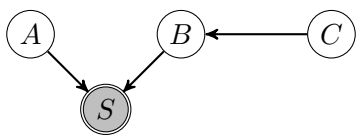

As seen in Example~\ref{example:gene_rct}, when not accounted for, selection bias can lead to incorrect conclusions when inferring causal relations, which  will then transfer poorly to other domains. Additionally, while confounding bias can be excluded through randomization, this is not the case for selection bias, indicating its intrinsic difficulty. 

The term `selection bias' is sometimes used as an umbrella term, indicating a systematic error in the sample that hinders the causal effect estimation and requires adjustment~\citep{bareinboim2012controlling}. In this work we specifically refer to types of selection bias such as non-compliance and volunteer bias~\citep{hernan2004structural}, that affect the inference of causal relations by changing the independence structure of the data at hand. Presently, there do exist algorithms for causal discovery under selection bias.

The seminal FCI algorithm~\citep{spirtes1995causal,spirtes1999algorithm} is sound and complete under both confounding and selection bias. It returns a \emph{Partial Ancestral Graph} (PAG), an equivalence class that encodes independence constraints and causal relations, and represents a set of ancestral graphs~\citep{zhang2008causalreasoning}. When selection bias is excluded, a criterion for extracting confounding relations from a PAG was introduced by \citet{zhang2008causalreasoning}. A procedure to read off ancestral relations was first conjectured by \citet{zhang2006causal}, shown to be sound in \citet{roumpelaki2016marginal} and generalized to cycles in \citet{mooij2020constraint}. However, a systematic procedure for extracting causal relations from this equivalence class when selection bias is present is, to the best of our knowledge, unknown.

\paragraph{Contributions}

In this work, we investigate how several constraint-based causal discovery methods are affected by the presence of selection bias. Empirically, we are motivated by the hypothesis that microarray data is possibly suffering from selection on an unmeasured fitness trait, as described in Example~\ref{example:gene_rct}.

We take a bottom-up approach and consider methods for causal discovery reason that reason from patterns of (in)dependence relations between three or four variables. These \emph{local} approaches, such as LCD~\citep{cooper1997simple} and (Extended) Y-Structures~\citep{mani2006theoretical,mooij2015empirical}, have simpler causal semantics that allow for a more straightforward causal interpretation than \emph{global} algorithms such as FCI. Additionally, less statistical testing is required, reducing the risk of error propagation, and these methods have a tractability advantage, trading off the completeness for computational efficiency. 

Our contributions are as follows. 
\begin{itemize}
   \itemsep0em
   \item For three variables, we investigate the failure modes of LCD in the presence of selection bias, and find that there exists no three variable constraint-based causal discovery approach if selection bias is admitted. (Section~\ref{sec:3var})
   \item For four variables, we show that the (Extended) Y-Structures pattern  is valid under selection bias and that it predicts an unconfounded causal relation. (Sec.~\ref{sec:4var})
   \item Using simulations and real-world microarray data, we show a successful application of Y-Structures, implying that a selection mechanism could plausibly underlay the microarray data at-hand. (Sec.~\ref{sec:experiments})
\end{itemize}
In the methods above we attempt to be as general as possible, and we allow for cycles, latent confounders and multiple selection bias variables to be present. 

\paragraph{Related work}

The recovery from selection bias, where the aim is to infer conditional distributions and causal relations from data under (partial) selection bias when the underlying causal graph is known, has been studied extensively in literature~\citep{bareinboim2012controlling,bareinboim2014recovering,bareinboim2015recovering,correa2017causal,correa2018generalized,correa2019adjustment,correa2019identification}. Selection bias can be seen as a special case of missingness~\citep{mohan2013missing,tu2019causal}. 

Constraint-based causal discovery methods reason over conditional independence relations derived from data to infer causal relations. The formative PC algorithm~\citep{spirtes2000causation} identifies all causal relations from independence statements when all relevant variables have been observed without sample selection. 
When selection bias or latent confounding are present, the output of the FCI algorithm~\citep{spirtes1995causal,spirtes1999algorithm} is sound and, augmented with additional rules, complete~\citep{zhang2008completeness}. Extensions of FCI include optimizing for computational efficiency~\citep{colombo2012learning,claassen2013learning}, soundness under interruptions at any stage~\citep{spirtes2001anytime} and when cycles are present~\citep{mooij2020constraint}.
A sound and complete reformulation of FCI in terms of minimal (in)dependence statements and one additional rule is presented in \citet{claassen2012logical}. In \citet{mooij2020joint}, a generic framework was introduced for incorporating background knowledge in causal discovery. \citet{cooper1995causal} introduced a constraint-based method for detecting selection bias.

In \citet{kemmeren2014large}, a microarray dataset with gene expressions is introduced, which has been used for the real-world evaluation of causal discovery methods such as LCD~\citep{versteeg2019boosting}. \citet{meinshausen2016pnas} shows the successful application of the ICP algorithm~\citet{peters2016causal} on this dataset.

\section{Background}

We first introduce graphical models, where we allow for cycles and latent confounders, and where we add explicit selection bias variables and context nodes that represent background information.

We write single variables as capitals, i.e.~$X$, and sets of variables as boldface, i.e.~$\bm{X}$. In the context of graphical models, we interchangeably use the terms `variable' and `node' for a random variable associated to a node. 

\subsection{Causal Graphical Models}

A \emph{Directed Mixed Graph} (DMG) is a graph $\mathcal{G} = (\bm{V},\bm{E})$ consisting of a node set $\bm{V}$ representing random variables with some distribution $\Prob \left(\bm{V}\right)$ and an edge set $\bm{E}$ consisting of directed edges ($\to$) and bidirected edges ($\oto$) between pairs of different nodes. Node pairs connected by an edge of any type are \emph{adjacent}, and a sequence of alternating nodes and edges, ending with a node, is a \emph{walk}. A \emph{path} is a walk where every node occurs at most once. A \emph{collider} is a node $X$ along a path with edge configuration $\sto X \ots$, where an asterisk indicates that an endpoint is either an arrowhead or a tail.

In the canonical causal interpretation, directed edges represent \emph{direct} causal relations with respect to $\bm{V}$ and bidirected edges represent latent confounding, and can be viewed as originating from an underlying graph where the confounding variables  have been marginalized out. We say that $X$ is a \emph{direct cause} and \emph{parent} of $Y$ if there exists a directed edge from $X$ towards $Y$.
An \emph{ancestral} relation from $X$ to $Y$ is when there exists a path between $X$ and $Y$ where all edges are directed and with an arrowhead towards $Y$. We then refer to $X$ as an \emph{ancestor} of $Y$, and $Y$ as a \emph{descendant} of $X$. The sets of parents / ancestors / descendants of $X$ are denoted as $\pa{X}$ / $\an{X}$ / $\de{X}$ respectively, where the same notation is used for sets $\bm{X}$ disjunctively.

A \emph{Markov property} relates separation properties of the graph to conditional independences in the corresponding distribution. As we allow for cycles to be present, the $\sigma$-separation Markov property is a non-trivial generalization of the $d$-separation Markov property~\citep{forre2017markov}, see Appendix~\ref{app:sigma_separation}. A causal graph is often associated to a \emph{Structural Causal Model} (SCM), where equations represent causal mechanisms as functions of observed endogenous variables and independent exogenous noise variables, see e.g. \citet{pearl2009causality}. Effectively, DMGs with the $\sigma$-separation property are a graphical representation of \emph{Simple} SCMs, a convenient subclass of all cyclic SCMs that allows for certain cyclic interactions while maintaining intuitive causal semantics~\citep{bongers2021foundations}. 

\subsection{Interventional Data and Background Knowledge}\label{sec:jci}

We use the \emph{Joint Causal Inference} (JCI) framework~\citep{mooij2020joint} to model certain background knowledge, interventions and other manipulations with \emph{context variables} $\bm{C} \subset \bm{V}$, where the remainder are denoted  as \emph{system variables} $\bm{V} \setminus \bm{C}$. The central aim of JCI is to define a meta-system of combined system and context variables that can be used for a variety of causal methods.

The full JCI framework requires up to four assumptions, of which we use the following here:
\begin{itemize}[noitemsep,topsep=0pt,label={},leftmargin=*]
   \item[] \textbf{JCI assumption 0} (Joint SCM) \emph{The data-generating process underlying system and context variables is modeled by a single simple SCM.}
   \item[] \textbf{JCI assumption 1} (Exogenity) \emph{No system variable is an ancestor of any context variable.}
 \end{itemize}

We refer to this combined modeling assumption as JCI-1. For the remainder of this work we have at most one context variable, denoted as $C$, which is graphically represented by a square node. We direct the reader to \citet{mooij2020joint} for the JCI assumptions related to larger context models that are outside the scope of this work. 

\subsection{Selection Bias Variables}

The presence of selection bias, represented by variable set $\bm{S}$, can severely hinder statistical procedures and the estimation of causal effects. It affects conditional distributions within the set of observed variables $\bm{V}$ through a generally unknown mechanism, which in principle could make the estimation of a causal effect strength arbitrarily difficult. In constraint-based causal discovery however, the target is the identification of a causal relation, establishing its presence or absence by reasoning from conditional independence constraints. The presence of $\bm{S}$ in general alters the independence model of observed set $\bm{V}$. In this work, the main task is the discovery of ancestral relations, as opposed to estimating their causal effect strength. 

We primarily use directed mixed graphs, which are not closed under conditioning, and where we explicitly include each selection bias variable $S \in \bm{S}$ (as opposed to ancestral graphs, see \citet{richardson2002ancestral}). Here, $\bm{S}$ encodes for each realization whether that sample is included in the data through an (unobserved) mechanism $\Prob\left( \bm{S} \given \bm{V} \right)$. Graphically, a selection bias variable is represented by a shaded and conditioned node (as in Fig.~\ref{fig:intro_example}), to emphasize its hidden and conditioned state. 

\section{Constraint-Based Causal Discovery}

We are interested in constraint-based causal discovery, inferring causal relations from conditional (in)dependence signals in data that is biased through selection. More precisely, the aim is to  identify the presence of ancestral causal relations when the underlying causal graph and the selection bias mechanism are unknown. 

We assume \emph{$\sigma$-faithfulness} (which we will refer to as `faithfulness' henceforth), implying that $\sigma$-separations in the graph explain all conditional independences in the observed distribution. Combined with the Markov property, conditional independences in the data map one-to-one to instances of $\sigma$-separation in the causal graph. 

The remainder of this section first introduces logical rules for causal inference. We then show how the LCD method~\citep{cooper1997simple} behaves under selection bias, and that there are no three variable constraint-based methods for causal discovery under selection bias. Lastly, the four variable Y-Structure approaches~\citep{mani2006theoretical,mooij2015empirical} are shown to be sound in predicting causal relations when selection bias is admitted.

\subsection{Logical Causal Inference Rules}\label{sec:loci}

In \citet{claassen2012logical}, the concept of a \emph{minimal conditional independence} for disjoint sets of variables $\{ X \}, \{ Y \}, \bm{W}, \bm{Z}$ is introduced:\footnote{Without faithfulness, the statements in this section hold for graphical notions of minimal $\sigma$-separations and $\sigma$-connections instead of minimal independences and dependences.}
\begin{equation}\label{eq:def_minsep}
   X \CI Y \mid \bm{W} \cup [\bm{Z}] := (X \CI Y \mid \bm{W} \cup \bm{Z}) \land (\forall \bm{Z^\prime} \subsetneq \bm{Z}: X \nCI Y \mid \bm{W} \cup \bm{Z^\prime})  \mathrm{,}
\end{equation}
where $\bm{W}$ represents all conditioned variables other than $\bm{Z}$. Similarly, a \emph{minimal conditional dependence} is defined as:
\begin{equation}\label{eq:def_mincon}
   X \nCI Y \mid \bm{W} \cup [\bm{Z}] := (X \nCI Y \mid \bm{W} \cup \bm{Z}) \land (\forall \bm{Z^\prime} \subsetneq \bm{Z}: X \CI Y \mid \bm{W} \cup \bm{Z^\prime})  \mathrm{.}
\end{equation}
Here, the intuition for the minimality of an independence (dependence) given $ \bm{Z}$ is that $\bm{Z}$ contains exactly those variables that have a meaningful role in separating (connecting) $X$ from $Y$. Indeed, for all proper subsets $\bm{Z^\prime}$ of $\bm{Z}$, $X$ and $Y$ are dependent when conditioning on $\bm{W \cup \bm{Z^\prime}}$ in \eqref{eq:def_minsep}, and analogously for \eqref{eq:def_mincon}. 

The following useful Lemma was introduced by \citet{claassen2012logical} as part of a reformulation of FCI, mapping  minimal (in)dependences to ancestral causal relations, and generalized to DMGs by \citet{mooij2020joint}. We note that implications similar to \eqref{eqn:loci_indep} were first introduced by \citet{spirtes1996polynomial} and \citet{spirtes1999algorithm}.\footnote{In \citet{spirtes1999algorithm} it can be found as $X \CI Y \given \mathsvar \cup [\bm{W} \cup \bm{Z}] \implies \bm{Z} \in \an{X \cup Y \cup \mathsvar}$, a special case of \eqref{eqn:loci_indep}.}

\begin{lemma}[\citet{claassen2012logical,mooij2020joint}]\label{lemma:loci_rules}
Let $\{ X \}, \{ Y \}, \bm{W}, \bm{Z}$ be disjoint sets of variables in a DMG of a simple and faithful SCM. Then the following statements hold:
\begin{equation}
   \label{eqn:loci_indep}
   X \CI Y \mid \bm{W} \cup [\bm{Z}] \implies \bm{Z} \in \an{X \cup Y \cup \bm{W}}\mathrm{,}
\end{equation}
\begin{equation}
   \label{eqn:loci_dep}
   X \nCI Y \mid \bm{W} \cup [\bm{Z}] \implies \bm{Z} \notin \an{X \cup Y \cup \bm{W}}\mathrm{.}
\end{equation}
\end{lemma}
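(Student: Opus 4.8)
The plan is to invoke faithfulness together with the Markov property to replace every (in)dependence statement by the corresponding $\sigma$-connection or $\sigma$-separation in the graph, and then argue purely graphically. The backbone is a single structural observation about how a $\sigma$-open walk reacts to enlarging the conditioning set by one node $Z$: passing from $C$ to $C\cup\{Z\}$ can turn an open walk into a blocked one \emph{only} by making $Z$ a blocking non-collider (a conditioned non-collider none of whose outgoing walk-edges points into $\scc{Z}$), and it can turn a blocked walk into an open one \emph{only} by opening colliders $V$ with $Z\in\de{V}$ (so that $V\in\an{C\cup\{Z\}}$). I would establish this dichotomy first, as both implications rest on it; it follows by inspecting the $\sigma$-non-collider and open-collider conditions, noting that shrinking a conditioning set never blocks a non-collider and never closes a collider.

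For \eqref{eqn:loci_indep} I fix $Z\in\bm{Z}$ and use minimality with $\bm{Z}'=\bm{Z}\setminus\{Z\}$ to obtain a walk $\pi$ between $X$ and $Y$ that is $\sigma$-open given $\bm{W}\cup(\bm{Z}\setminus\{Z\})$ but $\sigma$-blocked given $\bm{W}\cup\bm{Z}$. By the dichotomy $Z$ is a blocking non-collider on $\pi$, so it has an outgoing walk-edge $Z\to V_1$ with $V_1\notin\scc{Z}$. I then trace this directed sub-walk forward along $\pi$ until it terminates, which happens either at an endpoint or at a collider $V_k$ of $\pi$. Reaching $X$ or $Y$ gives $Z\in\an{X\cup Y\cup\bm{W}}$ at once; at a collider the open-collider condition gives $V_k\in\an{\bm{W}\cup(\bm{Z}\setminus\{Z\})}$, so either $Z\in\an{\bm{W}}$ (done) or $Z\in\an{Z'}$ for some $Z'\in\bm{Z}\setminus\{Z\}$. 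The crucial point is that, because the first edge left $\scc{Z}$, this $Z'$ cannot itself be an ancestor of $Z$; hence ``$Z$ is a strict ancestor of $Z'$'' is a well-founded relation on the finite set $\bm{Z}$, and a downward induction over it (the sinks being forced into $\an{X\cup Y\cup\bm{W}}$) propagates membership to all of $\bm{Z}$.

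For \eqref{eqn:loci_dep} I again fix $Z$, obtaining from minimality a walk $\pi$ that is $\sigma$-open given $\bm{W}\cup\bm{Z}$ while $X\CI Y\mid\bm{W}\cup(\bm{Z}\setminus\{Z\})$. By the dichotomy, $\pi$ must contain a collider $V$ that is open only through $Z$, i.e.\ $Z\in\de{V}$ and $V\notin\an{\bm{W}\cup(\bm{Z}\setminus\{Z\})}$; otherwise $\pi$ would already be open given the smaller set, contradicting the separation. Suppose for contradiction $Z\in\an{X\cup Y\cup\bm{W}}$. If $Z\in\an{\bm{W}}$ then every such $V$ lies in $\an{\bm{W}}\subseteq\an{\bm{W}\cup(\bm{Z}\setminus\{Z\})}$, a contradiction, so $Z$ is an ancestor of $X$ or $Y$, say $Z\in\an{Y}$. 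I would pick the $Z$-supported collider $V_0$ nearest $X$ along $\pi$ and reroute: replace the portion of $\pi$ after $V_0$ by the directed path $V_0\to\cdots\to Z\to\cdots\to Y$. This turns $V_0$ into a non-collider and eliminates every $Z$-supported collider (there is none before $V_0$ by the choice of $V_0$); moreover, since $V_0\notin\an{\bm{W}\cup(\bm{Z}\setminus\{Z\})}$ forces both directed segments $V_0\to^{*}Z$ and $Z\to^{*}Y$ to avoid $\bm{W}\cup(\bm{Z}\setminus\{Z\})$, the rerouted walk is $\sigma$-open given $\bm{W}\cup(\bm{Z}\setminus\{Z\})$, contradicting $X\CI Y\mid\bm{W}\cup(\bm{Z}\setminus\{Z\})$.

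I expect the main obstacle to be the cyclic case of \eqref{eqn:loci_indep}. In an acyclic graph the strict-ancestor relation on $\bm{Z}$ is automatically well-founded and the induction is immediate, but with feedback loops a naive argument yields only $Z\in\an{\bm{Z}\setminus\{Z\}}$ and could chase a cycle inside $\bm{Z}$ indefinitely. It is precisely the $\sigma$-separation non-collider condition --- that a conditioned non-collider blocks only when its outgoing walk-edge leaves its strongly connected component --- which forces $V_1\notin\scc{Z}$ and thereby breaks such cycles; this is where the simple-SCM hypothesis (equivalently, the $\sigma$-Markov property) does the real work, whereas the dependence direction goes through essentially unchanged from the acyclic case. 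The remaining effort is bookkeeping: checking that enlarging or shrinking the conditioning set by $Z$ never disturbs the other non-colliders and colliders of the walks involved, which is exactly what the structural dichotomy established at the outset guarantees.
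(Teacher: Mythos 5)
There is no in-paper proof to compare your attempt against: the paper states Lemma~\ref{lemma:loci_rules} purely by citation, importing the acyclic version from \citet{claassen2012logical} and its generalization to DMGs under $\sigma$-separation from \citet{mooij2020joint}, and neither the main text nor the appendices contain an argument for it. Judged on its own merits, your blind reconstruction is sound, and it is essentially the walk-based style of argument used in the cited works. The three load-bearing steps all check out: (i) the dichotomy holds because the collider condition (``collider in $\an{\bm{C}}$'') only becomes easier to satisfy as the conditioning set grows, while the non-collider condition depends only on membership of that node in the conditioning set and on the fixed SCC structure of the graph, so the only new blocking feature after adding $Z$ is $Z$ itself as a conditioned non-collider, and the only features that can stop blocking are colliders in $\an{Z}$; (ii) for \eqref{eqn:loci_indep}, the escape condition $V_1\notin\scc{Z}$ is exactly what rules out $Z'\in\an{Z}$ --- otherwise $V_1$ would be both a descendant and an ancestor of $Z$, hence in $\scc{Z}$ --- so your strict-ancestor relation on $\bm{Z}$ is a genuine strict partial order on a finite set and the downward induction closes the cyclic case; you correctly identify this as the point where simplicity/$\sigma$-separation does the real work; (iii) for \eqref{eqn:loci_dep}, taking the $Z$-supported collider $V_0$ nearest $X$ and rerouting through $V_0\to\cdots\to Z\to\cdots\to Y$ yields a walk whose new segment avoids $\bm{W}\cup(\bm{Z}\setminus\{Z\})$ precisely because $V_0\notin\an{\bm{W}\cup(\bm{Z}\setminus\{Z\})}$, and whose initial segment has no closed colliders by the choice of $V_0$, giving the contradiction with faithfulness. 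One wording slip is worth fixing: you characterize a blocking non-collider as one ``none of whose outgoing walk-edges points into $\scc{Z}$,'' whereas under the paper's definition a conditioned non-collider blocks as soon as \emph{at least one} outgoing walk-edge points to a node outside its strongly connected component; since what you actually use downstream is the correct existential consequence (there is an edge $Z\to V_1$ with $V_1\notin\scc{Z}$), nothing in the argument is affected.
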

Here, \eqref{eqn:loci_indep} states that a minimal independence results in inferring the presence of at least one ancestral relation. However, a minimal dependence for $Z$ in \eqref{eqn:loci_dep} implies that $Z$ is not ancestral to \emph{all} of $X$, $Y$ and conditioning set $\bm{W}$ (which can include a selection variable). 

\subsection{Three Variables: LCD}\label{sec:3var}

The \emph{Local Causal Discovery} (LCD)~\citep{cooper1997simple} method combines constraints on three variables with specific background knowledge on the causal relations. It allows for latent variables but not selection bias. A \emph{local} approach, a `pattern search' across all three variable subsets is carried out when more than three variables are included in the data.

In a recent formulation, cycles can be present, and the background information is represented in the form of a context variable $C$ subject to JCI-1 assumptions~\citep{mooij2020joint}.\footnote{The original form of LCD is slightly more general with the weaker assumption $X \notin \an{C}$. The remainder of this section also holds for this formulation.}

\begin{proposition}[LCD]\label{prop:lcd}
   Let $\langle C,X,Y \rangle$ be an ordered tuple of disjoint variables in a DMG of a simple and faithful SCM, where $C$ is a JCI-1 context variable. If
   \begin{equation}\label{eqn:lcd}
      C \CI Y \given [X]
   \end{equation}
   then $X \in \an{Y}$, $Y \notin \an{X}$, $X$ and $Y$ are unconfounded and $\Prob\left(Y \given \inter(X) \right) = \Prob\left(Y \given X\right)$.
\end{proposition}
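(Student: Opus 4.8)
The plan is to read \eqref{eqn:lcd} as a minimal conditional independence and feed it into Lemma~\ref{lemma:loci_rules}. Expanding the bracket notation via \eqref{eq:def_minsep} with $\bm{W} = \emptyset$ and $\bm{Z} = \{X\}$, the hypothesis $C \CI Y \given [X]$ is precisely $C \CI Y \given X$ together with the minimality clause over the unique proper subset $\emptyset \subsetneq \{X\}$, i.e.\ $C \nCI Y$. First I would apply the independence rule \eqref{eqn:loci_indep} to this minimal independence, giving $X \in \an{C \cup Y}$. Since $X$ is a system variable and $C$ a JCI-1 context variable, JCI assumption~1 rules out $X \in \an{C}$; combined with $X \in \an{C \cup Y}$ this forces $X \in \an{Y}$, the first conclusion. (Only the weaker fact $X \notin \an{C}$ of the footnote is used here.)

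The remaining conclusions are statements about the $X$--$Y$ pair, and I would obtain them by exploiting that $X$ both mediates and separates the $C$--$Y$ signal. Because $C \nCI Y$, faithfulness yields a $\sigma$-connecting walk between $C$ and $Y$ given the empty set; because $C \CI Y \given X$, every such walk is destroyed once $X$ enters the conditioning set. A walk that is active given $\emptyset$ but inactive given $\{X\}$ must traverse $X$ as a non-collider, so the $C$--$Y$ connection is routed through $X$. Exogeneity then fixes the orientation there: were the $C$-facing edge at $X$ a tail, the walk would contain a directed sub-path $X \to \cdots \to C$, making $X$ an ancestor of $C$ and contradicting JCI assumption~1; hence that edge carries an arrowhead into $X$, matching the intended picture $C \to \cdots \to X \to \cdots \to Y$.

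Both $Y \notin \an{X}$ and the unconfoundedness of $X$ and $Y$ then reduce to a single statement: there is no activatable walk reaching $Y$ that enters $X$ with an arrowhead. Indeed, a confounding walk $X \ots \cdots \sto Y$ and a feedback path $Y \to \cdots \to X$ both meet $X$ with an arrowhead. I would argue by contradiction: concatenating any such walk with the $C$-to-$X$ connection of the previous step produces a walk on which $X$ is a collider, and conditioning on the collider $X$ reactivates it into a $C$--$Y$ connection, so $C \nCI Y \given X$, contradicting the hypothesis. Hence no arrowhead-into-$X$ connection to $Y$ survives, which simultaneously yields $Y \notin \an{X}$ and that $X$ and $Y$ are unconfounded.

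Finally, $X \in \an{Y}$ together with $Y \notin \an{X}$ and the absence of confounding means no active back-door walk leaves $X$ into $Y$, so $X$ influences $Y$ only through directed paths emanating from $X$. I would close with the back-door (rule-2 do-calculus) argument, i.e.\ the truncated factorization valid in the simple-SCM setting, to obtain $\Prob(Y \given \do{X}) = \Prob(Y \given X)$. The step I expect to be the main obstacle is the collider bookkeeping of the previous paragraph carried out under $\sigma$-separation rather than plain $d$-separation: when $X$ and $Y$ may lie in a common strongly connected component, the rules governing when conditioning on $X$ blocks or reopens a walk are more delicate, and converting the walk-level reasoning into the clean ancestral and confounding statements of the proposition requires care.
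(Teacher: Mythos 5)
Your proposal is correct and takes essentially the same route as the paper's own proof: Lemma~\ref{lemma:loci_rules} plus JCI-1 exogeneity yields $X \in \an{Y}$, and your orientation/collider argument (arrowhead into $X$ on the $C$ side, so any feedback or confounding connection entering $X$ with an arrowhead would make $X$ a conditioned collider and reopen a $C$--$Y$ connection, contradicting $C \CI Y \given X$ under faithfulness) is exactly the paper's reasoning. The only difference is presentational: the paper implicitly marginalizes to the three-node DMG and concludes the graph must be $C \sto X \to Y$, whereas you run the same logic at the walk level in the full graph.
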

The proof is straightforward and given in Appendix~\ref{app:proof_lcd}. All DMGs that adhere  to the LCD conditions are summarized in Fig.~\ref{fig:lcd_nosb}.

When the data is collected under selection bias $\mathsvar$, the LCD constraints in \eqref{eqn:lcd} that can found in the data are essentially replaced with
\begin{equation}\label{eqn:lcd_sb}
   C \CI Y \given [X] \cup \mathsvar\mathrm{.}
\end{equation}
As has already been pointed out by \citet{cooper1997simple}, \eqref{eqn:lcd_sb} combined with the LCD background knowledge is not sufficient to identify causal relations. Still, a practitioner might be oblivious to the selection bias mechanism acting on the data at-hand, and we investigate LCD further in this condition.  

In Fig.~\ref{fig:lcd_sb}, several causal graphs that adhere to \eqref{eqn:lcd_sb} are shown. Fig.~\ref{fig:lcd_sb_a} shows a failure mode of \eqref{eqn:lcd_sb} where $S$ induces a spurious adjacency between $X$ and $Y$, but where $X$ and $Y$ are non-adjacent without selection bias. Curiously, in Fig.~\ref{fig:lcd_sb_b} we have modes where $Y \in \an{X}$ in the graph, the reverse of the causal relation to be inferred. On the other hand, Fig.~\ref{fig:lcd_sb_c} depicts an instance where the selection mechanism contributes to the dependence between $C$ and $X$, leading to $X \in \an{Y}$. This implies that one could find an increase in the number of discovered LCD independence patterns with the correct causal relation from data when a selection bias mechanism is enabled, compared to the same graph where it is absence. In Fig.~\ref{fig:lcd_sb_d} one of the possibilities is shown where $X \in \an{Y},$ but the size of the causal effect is not estimable.

\begin{figure}\centering
  \begin{minipage}[t][][b]{0.38\textwidth}\centering
    \centering
    \begin{tikzpicture}
      \node[varcon] (I) at (-1.5,0) {$C$};
      \node[var] (X) at (0,0) {$X$};
      \node[var] (Y) at (1.5,0) {$Y$};
      \node[] (empty) at (0,-0.5) {};
      \node[] (empty) at (0,+4.1) {};
      \draw[arr, dashed] (I) edge (X);
      \draw[biarr, bend left, dashed] (I) edge (X);
      \draw[arr] (X) edge (Y);
    \end{tikzpicture}\vfill
    \caption{\textbf{(LCD, no selection bias)} The three DMGs that adhere to LCD constraints without selection bias. Multiple dashed edges between a pair of nodes indicate that at least one of the edges must be present.}
    \label{fig:lcd_nosb}
   \end{minipage}\hfill
   \begin{minipage}[t][][b]{0.58\textwidth}\centering
      \subfigure{
      \label{fig:lcd_sb_a}
      \centering
      \begin{tikzpicture}
         \node[] (empty) at (0,-1.1) {};
         \node[] (empty) at (0,+0.5) {};
         \node[varcon] (I) at (-1.5,0) {$C$};
         \node[var] (X) at (0,0) {$X$};
         \node[var] (Y) at (1.8,0) {$Y$};
         \node[varsel] (S) at (0.9,-0.9) {$S$};
         \draw[arr, dashed] (I) edge (X);
         \draw[arr] (X) edge (S);
         \draw[arr, dashed] (Y) edge (S);
         \draw[biarr, bend left, dashed] (Y) edge (S);
         \draw[biarr, bend left, dashed] (I) edge (X);
      \end{tikzpicture}}\hfill
      \subfigure{
      \label{fig:lcd_sb_b}
      \centering
      \begin{tikzpicture}
         \node[] (empty) at (0,-1.1) {};
         \node[] (empty) at (0,+0.5) {};
         \node[varcon] (I) at (-1.8,0) {$C$};
         \node[var] (X) at (0,0) {$X$};
         \node[var] (Y) at (1.5,0) {$Y$};
         \node[varsel] (S) at (-0.9,-0.9) {$S$};
         \draw[arr, dashed] (I) edge (S);
         \draw[biarr, bend right, dashed] (I) edge (S);
         \draw[arr] (X) edge (S);
         \draw[arr, dashed] (Y) edge (X);
         \draw[biarr, dashed, bend right] (Y) edge (X);
      \end{tikzpicture}}\hfill
      \subfigure{
      \label{fig:lcd_sb_c}
      \centering
      \begin{tikzpicture}
         \node[] (empty) at (0,-1.1) {};
         \node[] (empty) at (0,+0.5) {};
         \node[varcon] (I) at (-1.8,0) {$C$};
         \node[var] (X) at (0,0) {$X$};
         \node[var] (Y) at (1.5,0) {$Y$};
         \node[varsel] (S) at (-0.9,-0.9) {$S$};
         \draw[arr, dashed] (I) edge (S);
         \draw[biarr, bend right, dashed] (I) edge (S);
         \draw[arr, dashed] (X) edge (S);
         \draw[biarr, bend left, dashed] (X) edge (S);
         \draw[arr] (X) edge (Y);
      \end{tikzpicture}}\hfill
      \subfigure{
         \label{fig:lcd_sb_d}
         \centering
         \begin{tikzpicture}
            \node[] (empty) at (0,-1.1) {};
            \node[] (empty) at (0,+0.5) {};
            \node[varcon] (I) at (-1.5,0) {$C$};
            \node[var] (X) at (0,0) {$X$};
            \node[var] (Y) at (1.8,0) {$Y$};
            \node[varsel] (S) at (0.9,-0.9) {$S$};
            \draw[arr] (X) edge (S);
            \draw[arr, dashed] (I) edge (X);
            \draw[arr, dashed] (Y) edge (S);
            \draw[biarr, bend left, dashed] (Y) edge (S);
            \draw[arr] (X) edge (Y);
            \draw[biarr, bend left, dashed] (I) edge (X);
         \end{tikzpicture}}
   \caption{\textbf{(LCD, selection bias)} Directed mixed graphs that satisfy the LCD pattern in \eqref{eqn:lcd_sb} in addition to Fig.~\ref{fig:lcd_nosb} when selection bias $\mathsvar$ is present. Multiple dashed edges between a node pair indicate that at least one of the depicted edges must be present. Failure modes are presented in (a) and (b), where $X \notin \an{Y}$,  but $X \in \an{Y}$ in the graphs in (c) and (d).
   }\label{fig:lcd_sb}
  \end{minipage}
\end{figure}

The result for LCD under selection bias leads to the following impossibility.
\begin{proposition}
   When selection bias is present, there exists no three variables constraint-based method for inferring the presence of ancestral causal relations from (in)dependence testing combined with JCI-1 background knowledge.
\end{proposition}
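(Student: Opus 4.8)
The plan is to prove the impossibility in the standard way for constraint-based discovery: by exhibiting, for each observable pattern, a pair of faithful DMGs that both satisfy JCI-1 and induce that same (in)dependence pattern over the observed triple $\langle C,X,Y\rangle$, yet disagree on the ancestral relation to be reported. Formally, I would model a three-variable method as a map from the independence model over $\{C,X,Y\}$ (with the selection set $\mathsvar$ implicitly conditioned on) together with the JCI-1 background knowledge to a set of causal conclusions, and call it \emph{sound} if every conclusion it outputs holds in \emph{every} DMG (simple, faithful, JCI-1, with selection bias) realizing the observed pattern. It then suffices to show that no realizable pattern \emph{entails} the presence of a system-level ancestral relation, i.e.\ that whenever some realizing graph has $X\in\an{Y}$ (resp.\ $Y\in\an{X}$), another realizing graph of the same pattern does not.

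First I would reduce the patterns that could conceivably license a positive conclusion. By Lemma~\ref{lemma:loci_rules}, positive ancestral claims are licensed through the minimal-independence rule \eqref{eqn:loci_indep}. Since every distribution is conditioned on the full selection set, the only minimal independences available over the observed triple are $C \CI Y \given [X]\cup\mathsvar$, its relabeling $C \CI X \given [Y]\cup\mathsvar$, and $X \CI Y \given [C]\cup\mathsvar$. Applying \eqref{eqn:loci_indep} together with JCI-1 (so $X,Y\notin\an{C}$), the first yields only $X\in\an{\{Y\}\cup\mathsvar}$, the second only $Y\in\an{\{X\}\cup\mathsvar}$, and the third $C\in\an{\{X,Y\}\cup\mathsvar}$. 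The crucial weakening is that the ancestral target has become $\{Y\}\cup\mathsvar$ rather than $Y$: the detected ancestor may be a hidden selection variable. The third pattern concludes nothing about a system-to-system relation (and JCI-1 does not forbid $C\in\an{X}$), so it is irrelevant to inferring the presence of an ancestral relation between $X$ and $Y$.

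Next I would invoke the graphs already collected in Fig.~\ref{fig:lcd_sb}. All of \ref{fig:lcd_sb_a}--\ref{fig:lcd_sb_d} satisfy the selection-bias LCD pattern \eqref{eqn:lcd_sb} and JCI-1, but \ref{fig:lcd_sb_a} and \ref{fig:lcd_sb_b} have $X\notin\an{Y}$ (indeed some realization of \ref{fig:lcd_sb_b} has $Y\in\an{X}$, the reverse relation), while \ref{fig:lcd_sb_c} and \ref{fig:lcd_sb_d} have $X\in\an{Y}$. The verification step is to pick a representative from each group whose \emph{full} independence model over $\{C,X,Y\}$ (conditioned on $S$) coincides, and to confirm each is $\sigma$-faithful; then the pattern $C \CI Y \given [X]\cup\mathsvar$ is realized by graphs disagreeing on the $X$--$Y$ ancestral relation. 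Consequently any method outputting $X\in\an{Y}$ on this pattern is refuted by \ref{fig:lcd_sb_a}/\ref{fig:lcd_sb_b}, and any method outputting $Y\in\an{X}$ (or asserting the absence of an $X$-to-$Y$ ancestral relation) is refuted by \ref{fig:lcd_sb_c}/\ref{fig:lcd_sb_d}. The relabeled pattern is handled symmetrically.

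To upgrade ``the logically informative patterns fail'' to ``all patterns fail,'' I would finally argue that every remaining realizable independence model over $\{C,X,Y\}$ can be produced by a DMG with no directed path between $X$ and $Y$, creating all residual associations through $\mathsvar$ and bidirected edges, so that no pattern outside the LCD family entails a positive system ancestral relation either. I expect the main obstacle to be precisely this exhaustiveness step, in two parts: (i) confirming that the chosen representatives in Fig.~\ref{fig:lcd_sb} share an \emph{identical} conditional-independence model under conditioning on $S$ rather than merely the single defining constraint, which requires checking all $\sigma$-separations among $\{C,X,Y\}$ given $S$; and (ii) ruling out that some cleverer, non-rule-based method could extract a sound positive ancestral claim from a pattern on which Lemma~\ref{lemma:loci_rules} is silent. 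Step (ii) is what makes this a genuine impossibility rather than a limitation of the logical rules, and I would discharge it with the explicit ``no-directed-path'' realizations above, which certify that the presence of a system ancestral relation is never entailed.
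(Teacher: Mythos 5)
Your logical framework is the right one: model a three-variable method as a map from the observed (in)dependence pattern over $\{C,X,Y\}$ (everything implicitly conditioned on $\mathsvar$) plus the JCI-1 knowledge to causal conclusions, call it sound if every conclusion holds in \emph{every} faithful DMG realizing that pattern, and then show that no realizable pattern entails a system-level ancestral relation. Your use of Fig.~\ref{fig:lcd_sb} is also correct, and in fact you need less than you ask for: to refute presence-inference it suffices that each pattern admits \emph{one} realization without the relation (e.g.\ Fig.~\ref{fig:lcd_sb_a} realizes the full LCD pattern \eqref{eqn:lcd_sb} with $X$ and $Y$ non-adjacent), so you do not need to match identical full CI models across the groups (a)/(b) and (c)/(d). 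In spirit this is the same strategy as the paper's.

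The genuine gap sits exactly at the step you yourself flag as the main obstacle, and it is not a side condition but the entire content of the proposition: the claim that \emph{every} realizable independence model over $\{C,X,Y\}$ given $\mathsvar$ can be realized by a faithful DMG with no directed path between $X$ and $Y$ is asserted, not proven. Your Lemma~\ref{lemma:loci_rules} narrowing cannot carry this weight, because the lemma provides sound rules but no completeness guarantee; the paper explicitly notes that it is unknown how the PAG equivalence class behaves under JCI-1 assumptions, which is precisely why one cannot argue that "patterns on which the rules are silent entail nothing." Nor is the sketched generic construction routine: local edge surgery does not preserve the independence model (replacing $X \to Y$ by $X \oto Y$ turns $X$ into a collider and changes $\sigma$-separations unless $X \in \an{\mathsvar}$), the realizing graphs may be cyclic and may contain several selection variables, and $\sigma$-faithful realizability of each constructed DMG must be certified. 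Consequently the argument collapses to a case analysis over all realizable patterns with a verified counterexample graph for each --- which is exactly what the paper does, and does by an \emph{automated exhaustive enumeration} over all DMGs with optional JCI-1 background knowledge, precisely because the number of cases makes the manual argument you outline impractical. To complete your proposal you would have to either carry out that enumeration (or a hand-checked equivalent, pattern by pattern, with full $\sigma$-separation verification of each witness graph) or find a structural theorem that currently does not exist.
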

We note that it is yet unknown how the partial ancestral graph equivalence class behaves under JCI-1 assumptions, as pointed out by \citet{mooij2020constraint}. This prevents a proof strategy that reasons directly over PAGs, and instead we exhaustively search over all DMGs with optional JCI-1 background knowledge. Due to the sheer number of possible DMGs, this enumeration is automated.

\subsection{Four Variables: Y-Structures}\label{sec:4var}

Given the results for three variables, we now turn to the four variable case.

The \emph{Y-Structure}~\citep{mani2006theoretical} and the \emph{Extended Y-Structure}~\citep{mooij2015empirical} algorithms, which we collectively refer to as `Y-Structures', do not require the presence of a JCI-1 context variable. Each method searches for a specific pattern of (in)dependence constraints between a quadruple of variables. In \citet{mani2006theoretical} and \citet{mooij2015empirical}, it is shown that Y-Structures imply an ancestral causal relation which is unconfounded when the presence of selection bias is excluded. \citet{cooper1997simple} stated that the constraints from an Extended Y-Structure imply that selection bias can be excluded, but did not give a proof. 

Here we show soundness of the Extended Y-Structure when cycles are allowed and multiple selection bias variables may be present.
\begin{proposition}[Extended Y-Structure]\label{prop:ext_ystruc}
    Let $\langle V,W,X,Y \rangle$ be an ordered tuple of disjoint variables in a DMG of a simple and faithful SCM and $\mathsvar$ be the 
    set of selection bias variables. If
    \begin{align}\begin{split}\label {eq:ystr}
      V \CI Y \given& [X] \cup \mathsvar \\
      V \nCI W \given& [X] \cup \mathsvar\mathrm{,}
    \end{split}\end{align}
    then $X \in \an{Y}$, $Y \notin \an{X}$, $X \notin \an{\mathsvar}$
    and $X$ and $Y$ are unconfounded.
\end{proposition}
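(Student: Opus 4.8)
The plan is to read the four conclusions off in two stages: the two ancestral facts follow purely logically from Lemma~\ref{lemma:loci_rules}, while the directionality and unconfoundedness are obtained by surgery on $\sigma$-connecting walks, exploiting that $X$ can be opened up only as a collider. I would first instantiate Lemma~\ref{lemma:loci_rules} with $\bm{W} = \mathsvar$ and $\bm{Z} = \{X\}$. Reading the first line of \eqref{eq:ystr} as the minimal independence $V \CI Y \given [X] \cup \mathsvar$ and applying \eqref{eqn:loci_indep} yields $X \in \an{V \cup Y \cup \mathsvar}$; reading the second line as the minimal dependence $V \nCI W \given [X] \cup \mathsvar$ and applying \eqref{eqn:loci_dep} yields $X \notin \an{V \cup W \cup \mathsvar}$, that is $X \notin \an{V}$, $X \notin \an{W}$ and $X \notin \an{\mathsvar}$. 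The last of these is already one of the claimed conclusions. Intersecting the two, since $X$ is an ancestor of neither $V$ nor any node of $\mathsvar$, the disjunction $X \in \an{V \cup Y \cup \mathsvar}$ collapses to $X \in \an{Y}$. Thus two of the four conclusions drop out immediately.

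Next I would manufacture a \emph{collider half-walk}. Minimality of the dependence gives $V \CI W \given \mathsvar$ together with $V \nCI W \given \{X\} \cup \mathsvar$, so some walk from $V$ to $W$ is $\sigma$-connecting given $\{X\} \cup \mathsvar$ but $\sigma$-blocked given $\mathsvar$. Because the two conditioning sets differ only in $X$ and $X \notin \an{\mathsvar}$, the only way adding $X$ can turn a blocked walk into a connecting one is by activating a collider that is an ancestor of $X$ but not of $\mathsvar$; following this collider along its directed route to $X$ produces a walk $\beta$ from $V$ to $X$ that is $\sigma$-connecting given $\{X\} \cup \mathsvar$ and terminates with an arrowhead at $X$ (schematically $V \cdots \sto X$). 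This $\beta$ is exactly the role of the variable $W$: it certifies that the $V$-side reaches $X$ through an arrowhead, so that conditioning on $X$ opens, rather than blocks, anything attached behind $X$.

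I would then rule out $Y \in \an{X}$ and confounding by the same surgery. If $Y \in \an{X}$, take a directed path from $Y$ to $X$: should it pass through some $s \in \mathsvar$, then $Y \in \an{s}$ and hence $X \in \an{Y} \subseteq \an{s}$ by transitivity, contradicting $X \notin \an{\mathsvar}$; otherwise reverse the path to a walk $X \ots \cdots Y$ with an arrowhead at $X$ and concatenate it with $\beta$, making $X$ a collider that is active given $\{X\} \cup \mathsvar$, so the resulting $V$--$Y$ walk is $\sigma$-connecting given $\{X\} \cup \mathsvar$ and contradicts the first line of \eqref{eq:ystr}. Hence $Y \notin \an{X}$. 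For unconfoundedness I would take any \emph{confounding} walk between $X$ and $Y$ — a $\sigma$-connecting (given $\mathsvar$) walk with arrowheads into both endpoints — and concatenate it with $\beta$ in the same way, again making $X$ an active collider and contradicting $V \CI Y \given [X] \cup \mathsvar$.

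The main obstacle is the $\sigma$-separation bookkeeping in the cyclic case: I must verify that each concatenated walk is genuinely $\sigma$-connecting, which means checking the collider rule (a collider is active exactly when it is an ancestor of the conditioning set — here guaranteed because $X$ is itself conditioned on) together with the strongly-connected-component exception for non-colliders lying in $\mathsvar$, and confirming that no selection node silently blocks the appended back-walk. The transitivity observation — that $X \in \an{Y}$ forbids any directed route from $Y$ into $\mathsvar$ — is precisely what disposes of the otherwise troublesome sub-case where a selection variable sits on the path witnessing $Y \in \an{X}$. The remaining care is to pin down the graphical definition of \emph{confounded} so that it matches the acyclic Y-structure results of \citet{mani2006theoretical} and \citet{mooij2015empirical}.
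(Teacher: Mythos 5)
Your proposal is correct, and its first stage (the two applications of Lemma~\ref{lemma:loci_rules}, giving $X \in \an{Y}$ and $X \notin \an{V \cup W \cup \mathsvar}$) coincides with the paper's; but your geometric stage takes a genuinely different route. The paper first reduces w.l.o.g.\ to the marginal DMG on $\{V,W,X,Y\} \cup \mathsvar$, then draws its witness from minimality of the \emph{independence}: $V \nCI Y \given \mathsvar$ yields a $\sigma$-open path $\mathcal{P}$ from $V$ to $Y$ given $\mathsvar$, the constraint $V \CI W \given \mathsvar$ serves only to keep $W$ off $\mathcal{P}$, and a case analysis (possible because, after marginalization, the interior nodes of $\mathcal{P}$ other than $X$ must lie in $\mathsvar$) forces $\mathcal{P} = V \cdots \sto X \to Y$; the prefix $V \cdots \sto X$ then plays exactly the role of your $\beta$ in the same collider-activation contradictions that conclude both proofs. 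You instead mine the minimal \emph{dependence}: a $V$--$W$ walk that is blocked given $\mathsvar$ but open given $\{X\} \cup \mathsvar$ must owe its unblocking to a collider in $\an{X} \setminus \an{\mathsvar}$, which you route into $X$; the statement $V \nCI Y \given \mathsvar$ is never used geometrically at all. The paper's marginalization buys a concrete reading of ``unconfounded'' (no $X \oto Y$ edge in the latent projection) at the price of invoking marginalization of cyclic simple SCMs and an enumeration of path shapes; your construction stays in the original graph, is more local and reusable, and your transitivity disposal of the subcase of a directed $Y$-to-$X$ path through $\mathsvar$ is cleaner than the paper's corresponding case. The caveats you flag are genuine but minor, and no worse than the paper's own informality: the open-walk-to-open-path conversion under $\sigma$-separation, the endpoint convention for $\beta$ (only its interior need be open, since $X$ re-enters as a conditioned collider), and the definition of confounding --- if you state it for paths into both endpoints that are open given $\mathsvar$ (so $X$ cannot recur in the interior), your notion implies the paper's, and your concatenation argument goes through verbatim.
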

\begin{proof}
    First we note that we can assume without loss of generality that there are no other variables in the DMG/SCM than $\{V, W, X, Y\} \cup  \mathsvar$. Indeed, if there were, we could simply marginalize them out.\footnote{For an acyclic SCM, this can be simply done by substitution, for a cyclic (simple) SCM the operation is somewhat more involved. On the graph level, the marginalization corresponds with an operation also referred to as latent projection. See \cite{bongers2021foundations} for details on marginalizations.}
    Since ancestral relations are preserved by the marginalization, the inferred ancestral relations in the marginalized graph must also hold in the original graph. Also, marginalization can only add bidirected edges to the remaining variables, and never remove them. Hence the conclusion of unconfoundedness must also hold in the original graph.

    The application of Lemma~\ref{lemma:loci_rules} to $V \CI Y \given [X] \cup \mathsvar$ implies that $X \in \an{V \cup Y \cup \mathsvar}$, and applying it to $V \nCI W \given [X] \cup \mathsvar$ gives $X \notin \an{V \cup W \cup \mathsvar}$, i.e.~$X \notin \an{\mathsvar}$. Substitution leads directly to $X \in \an{Y}$.
   
    We now consider paths between $V$ and $Y$, and note that all such paths must be $\sigma$-blocked given $X \cup \mathsvar$, as $V \CI Y \given X \cup \mathsvar$. But there must exist a $\sigma$-open path $\mathcal{P}$ between $V$ and $Y$ given only $\mathsvar$, as $V \nCI Y \given \mathsvar$ in~\eqref{eq:ystr}. However, $V \CI W \given \mathsvar$ implies that all paths between $V$ and $W$ are $\sigma$-blocked given $\mathsvar$, such that $\mathcal{P}$ cannot contain $W$. On the other hand, $\mathcal{P}$ must contain $X$ such that it can be blocked given $X \cup \mathsvar$. We are left with checking the remaining options for path $\mathcal{P}$.
   
    Suppose $\mathcal{P}$ ends with an edge between $Y$ and some $S_i \in \mathsvar$ i.e.~$V \dots S_i \sts Y$, where $\sts$ can be any of the edges $\ot$, $\to$ and $\oto$. Then, $\mathcal{P}$ must be of the form $V \dots \sto X \ots S_j \dots S_i \sts Y$, where the subpath $S_j \dots S_i$ consists entirely of nodes in $\mathsvar$, and the orientations near $X$ are due to $X \notin \an{V \cup \mathsvar}$. This is a contradiction, because $X$ is a collider that is not ancestor of $\mathsvar$, hence blocking the path. Thus $\mathcal{P}$ must be of the form $V \dots X \sts Y$. Since $X \notin \an{V \cup W \cup \mathsvar}$, all edges between $X$ and $\{V,W\} \cup \mathsvar$ are with an arrowhead on $X$, such that $\mathcal{P}$ must be of the form $V \dots \sto X \to Y$, where $X \oto Y$ and $X \ot Y$ are excluded as otherwise we would again obtain a contradiction by $X$ being a collider.

  Finally, it remains to be shown that $Y \not\in \an{X}$ and that $X \oto Y$ is not in the graph. If either were the case, then it would imply the existence of a $\sigma$-open path between $V$ and $Y$ given $X \cup \mathsvar$, which would be a contradiction.
\end{proof}
Unconfoundedness of $X$ and $Y$, $X \in \an{Y}$ and $Y \notin \an{X}$ together lead to the following.
\begin{corollary}\label{cor:ext_ystruc}
   The causal relation $X \in \an{Y}$ in Prop.~\ref{prop:ext_ystruc} is identifiable, i.e.
   \begin{equation}\label{eqn:cor_ext_ystruc}
      \Prob\left(Y \given \inter(X) \cup \mathsvar \right) = \Prob\left(Y \given X, \mathsvar\right)\mathrm{.}
   \end{equation}
\end{corollary}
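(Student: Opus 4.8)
The plan is to reduce the distributional identity~\eqref{eqn:cor_ext_ystruc} to a single separation statement and then verify that statement graphically. Concretely, I would invoke the action/observation-exchange rule of the (\(\sigma\)-separation) do-calculus: the equality \(\Prob\left(Y \given \inter(X) \cup \mathsvar\right) = \Prob\left(Y \given X, \mathsvar\right)\) holds as soon as \(X\) and \(Y\) are \(\sigma\)-separated given \(\mathsvar\) in the graph \(G_{\underline{X}}\) obtained from the (marginalized) DMG by deleting every edge out of \(X\). The ingredients already secured in Prop.~\ref{prop:ext_ystruc} are exactly what makes this applicable: \(X \in \an{Y}\) with \(Y \notin \an{X}\) fixes the causal direction, unconfoundedness of \(X\) and \(Y\) removes any bidirected edge between them, and \(X \notin \an{\mathsvar}\) (together with the derived \(Y \notin \an{\mathsvar}\), which follows from \(X \to Y\) and \(X \notin \an{\mathsvar}\)) guarantees that selecting on \(\mathsvar\) never conditions on a descendant of \(X\). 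So the whole proof reduces to showing that \(X\) and \(Y\) are \(\sigma\)-separated given \(\mathsvar\) in \(G_{\underline{X}}\).

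First I would record the local shape of the graph. Since \(X \notin \an{V \cup W \cup \mathsvar}\), the only edge leaving \(X\) is \(X \to Y\); hence in \(G_{\underline{X}}\) every edge incident to \(X\) carries an arrowhead at \(X\), i.e.\ \(X\) is a sink. The same reasoning applied to \(Y\), which is an ancestor neither of \(\{V,W\}\cup\mathsvar\) nor of \(X\), shows \(Y\) is a sink as well. Consequently any path between \(X\) and \(Y\) in \(G_{\underline{X}}\) both begins and ends with an inward arrowhead, i.e.\ it has the form \(X \ots \cdots \sto Y\). Now the key step: suppose for contradiction that some such path \(\beta\) were \(\sigma\)-open given \(\mathsvar\). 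From the proof of Prop.~\ref{prop:ext_ystruc} there is a \(\sigma\)-open path \(\mathcal{P}\) from \(V\) to \(Y\) given \(\mathsvar\) of the form \(V \cdots \sto X \to Y\); discarding its final edge leaves a \(\sigma\)-open (given \(\mathsvar\)) sub-path \(\mathcal{P}'\) from \(V\) to \(X\) ending with an arrowhead at \(X\). Concatenating \(\mathcal{P}'\) with \(\beta\) produces a walk from \(V\) to \(Y\) on which \(X\) is a collider.

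I would then enlarge the conditioning set by \(X\): the collider \(X\) is trivially an ancestor of \(\{X\}\cup\mathsvar\) and so becomes active, while adding \(X\) can neither block any non-collider of \(\mathcal{P}'\) or \(\beta\) (the only node it could block is \(X\) itself, which is precisely the junction collider) nor deactivate any collider (ancestry sets only grow). Hence the concatenated walk is \(\sigma\)-open given \(X \cup \mathsvar\), and after contracting it to a path this yields \(V \nCI Y \given X \cup \mathsvar\), contradicting the premise \(V \CI Y \given [X] \cup \mathsvar\). Therefore no open \(\beta\) exists, which gives the required separation and hence~\eqref{eqn:cor_ext_ystruc}.

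I expect the delicate part to be the \(\sigma\)-separation bookkeeping under cycles, since there the non-collider blocking condition carries a strongly-connected-component exception and colliders open through ancestry rather than membership. I would therefore check carefully that enlarging the conditioning set from \(\mathsvar\) to \(X \cup \mathsvar\) leaves every internal node of \(\mathcal{P}'\) and \(\beta\) in the same open/blocked state—this holds because the strongly connected components are unchanged (the edge \(X \to Y\) lies on no cycle, as \(Y \notin \an{X}\)) and because only \(X\) is newly conditioned—and I would invoke the standard equivalence between \(\sigma\)-connection by walks and by paths when contracting the concatenation. A second, more foundational point to nail down is the validity of the action/observation-exchange step itself in this setting: it must be justified for simple SCMs with cycles and for conditioning on selection variables, most cleanly directly from the structural equations (the subsystem on \(\{V,W\}\cup\mathsvar\) together with its latents is autonomous and unaffected by \(\inter(X)\), so only the mechanism of \(Y\) given \(X\) and its remaining, \(X\)-independent inputs is relevant), rather than by quoting the acyclic do-calculus verbatim.
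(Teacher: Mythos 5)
Your proof is correct, and it takes a genuinely different route from the paper, which in fact offers no real proof at all: the corollary is preceded only by the sentence that unconfoundedness, $X \in \an{Y}$ and $Y \notin \an{X}$ ``together lead to'' \eqref{eqn:cor_ext_ystruc}. Read literally, that gloss is not a valid inference in the presence of selection bias: a graph with edges $X \to Y$, $X \oto S$, $Y \oto S$ satisfies all three of those facts (and even $X \notin \an{\mathsvar}$), yet conditioning on $S$ opens the path $X \oto S \oto Y$ in $G_{\underline{X}}$, so the claimed identity fails there; what excludes such graphs is the full premise \eqref{eq:ystr}, and that is exactly what your argument exploits. Your route --- reducing \eqref{eqn:cor_ext_ystruc} to the action/observation-exchange criterion that $X$ and $Y$ be $\sigma$-separated given $\mathsvar$ in $G_{\underline{X}}$, and then showing that any $\sigma$-open path $\beta$ between $X$ and $Y$ would concatenate with the sub-path $\mathcal{P}'$ (inherited from the proof of Prop.~\ref{prop:ext_ystruc}) into a walk from $V$ to $Y$ that is $\sigma$-open given $X \cup \mathsvar$ with $X$ as an activated collider, contradicting $V \CI Y \given X \cup \mathsvar$ under faithfulness --- therefore does real work that the paper skips, and your bookkeeping is sound: $X$ and $Y$ are sinks in $G_{\underline{X}}$, ancestor sets only grow when $X$ is added to the conditioning set, the strongly connected components of $G$ and $G_{\underline{X}}$ coincide because $X \to Y$ lies on no directed cycle, and the walk-to-path reduction for $\sigma$-connection is standard. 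The one external ingredient is the exchange rule itself, which the paper never states for simple SCMs with cycles and selection conditioning; you correctly flag this, and it can be discharged by citing the generalized do-calculus for $\sigma$-separation of Forr\'e and Mooij (2019), which covers precisely cycles, latent confounders and selection bias, rather than re-deriving it from the structural equations. In short, your argument is not merely an alternative to the paper's justification; it supplies the proof the paper omits, at the cost of invoking one result beyond the paper's stated toolkit.
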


The Y-Structure method imposes two more constraints $W \CI Y \given [X] \cup \mathsvar$ in addition to \eqref{eq:ystr}, symmetrizing the equations for $W$ and $V$~\citep{mooij2015empirical}. Soundness follows from Prop.~\ref{prop:ext_ystruc}. Examples of graphs of the (Extended) Y-Structures are depicted in Fig.~\ref{fig:ystructures}.

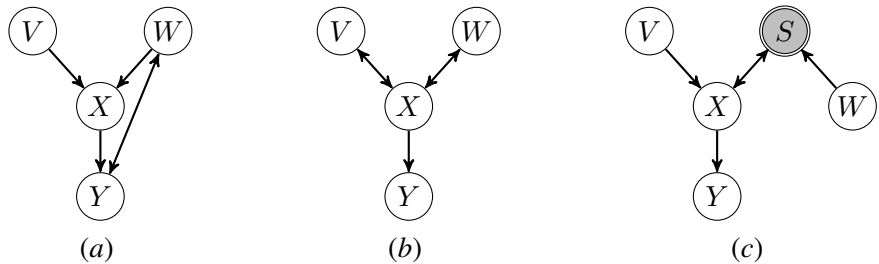
\begin{figure}\centering
   \subfigure[]{\label{fig:yst_a}
      \centering
      \begin{tikzpicture}
         \node[var] (X) at (0,0) {$X$};
         \node[var] (Y) at (0,-1.2) {$Y$};
         \node[var] (V) at (-0.9,1.0) {$V$};
         \node[var] (W) at (0.9,1.0) {$W$};
         \draw[arr] (X) edge (Y);
         \draw[arr] (V) edge (X);
         \draw[arr] (W) edge (X);
         \draw[biarr] (W) edge (Y);
         \draw[noarr,dashed,opacity=0,loop,looseness=4,in=90,out=180,blue] (V) edge (V);
         \draw[noarr,dashed,opacity=0,loop,looseness=4,in=0,out=90,blue] (W) edge (W);
       \end{tikzpicture}}
      \subfigure[]{
      \label{fig:yst_b}
      \centering
      \begin{tikzpicture}
         \node[var] (X) at (0,0) {$X$};
         \node[var] (Y) at (0,-1.2) {$Y$};
         \node[var] (V) at (-0.9,1.0) {$V$};
         \node[var] (W) at (0.9,1.0) {$W$};
         \draw[arr] (X) edge (Y);
         \draw[biarr] (V) edge (X);
         \draw[biarr] (W) edge (X); 
         \draw[noarr,dashed,opacity=0,loop,looseness=4,in=90,out=180,blue] (V) edge (V);
         \draw[noarr,dashed,opacity=0,loop,looseness=4,in=0,out=90,blue] (W) edge (W);
      \end{tikzpicture}}
      \subfigure[]{
         \label{fig:yst_c}
         \centering
         \begin{tikzpicture}
            \node[var] (X) at (0,0) {$X$};
            \node[var] (Y) at (0,-1.2) {$Y$};
            \node[var] (V) at (-0.9,1.0) {$V$};
            \node[var] (W) at (1.8,0) {$W$};
            \node[varsel] (S) at (0.9,1.0) {$S$};
            \draw[arr] (X) edge (Y);
            \draw[arr] (V) edge (X);
            \draw[arr] (W) edge (S); 
            \draw[biarr] (X) edge (S); 
            \draw[noarr,dashed,opacity=0,loop,looseness=4,in=90,out=180,blue] (V) edge (V);
            \draw[noarr,dashed,opacity=0,loop,looseness=4,in=0,out=90,blue] (W) edge (W);
         \end{tikzpicture}}
    \caption{\textbf{(Y-Structures)} Examples of mixed graphs that adhere to the Y-Structure (a) and the Extended Y-Structure (b) patterns. In (c), the presence of selection bias $\mathsvar$ yields an open path between auxiliary variable $W$ and $Y$.}
    \label{fig:ystructures}
 \end{figure}

\section{Experiments}\label{sec:experiments}

We present the results of several simulation experiments, where we apply causal discovery method to biased and unbiased data, and several real-world experiments on gene expressions. Code for the experiments in this section is provided at \url{https://github.com/philipversteeg/sbcd}.

\subsection{Methods and Estimators}

Apart from LCD and Y-Structures, we include Invariant Causal Prediction (ICP)~\citep{peters2016causal}, a state-of-the-art method for causal discovery in this setting~\citep{meinshausen2016pnas}, as a baseline. We denote the practical estimators of LCD, Y-Structures, Extended Y-Structures and ICP as \lcd, \yst, \yste and \icp respectively. See Appendix~\ref{app:estimator_details} for details on the ICP method and on the implementation of all practical estimators used.

\paragraph{Finite Sample Scoring}

In practice, each method assigns a score to each discovered causal relation, indicating a level of confidence in the prediction. For \lcd, we follow \citet{mooij2020joint} in using $-\log(p_{CY})$, where $p_{CY}$ is the $p$-value associated under the null hypothesis of independence between context variable $C$ and target variable $Y$.\footnote{Here the $p$-value of any appropriate conditional independence test can be used.} For \icp we use the maximum of $p$-values for the predicted parents. 

In \yst and \yste, for each discovered relation $X \in \an{Y}$ we compute 
\begin{equation}\label{eqn:finite_score}
   \max_{V^\prime,W^\prime} \min\left(-\log(p_{V^\prime Y}), -\log(p_{W^\prime Y})\right)\mathrm{,}
\end{equation}
where $p_{V{^\prime}Y}$ and $p_{W{^\prime}Y}$ are the $p$-values under the null hypothesis of dependence between $Y$ and both $V{^\prime}$ and $W{^\prime}$ respectively, and where we maximize over the discovered patterns $\langle {V^\prime},{W^\prime},X,Y \rangle$. A discovered Y-Structure is thus ranked higher when the smallest of the marginal dependences between the target variable $Y$ with both of the two auxiliary variables is larger.

\subsection{Simulations}

We run several experiments where we simulate linear-Gaussian SCMs for both given and randomly sampled directed graphs. In these we include an explicit selection bias node for the preferential sampling and a JCI-1 context node encoding interventional data. As our primary aim in these experiments is assessing algorithm performance under the effects of selection bias, we do not include latent confounding and cycles. 

The edge weights between system, context and selection bias variables are sampled uniformly from $[-1.5,-0.5] \cup [0.5,1.5]$. Weights are rescaled to counter an accumulation of variance among nodes that are further in the topological ordering. The exogenous noise variables are drawn independently from a standard-Gaussian distribution. 

\paragraph{Unbiased and Biased Data}

For a given graph and edge weights, two data sets are sampled: one where the selection bias mechanism is present (\dsel) and one where it is disabled (\dnosel). In \dsel, samples are included conditional on $\sum \bm{S} \in [2,2.5]$, where the data in \dnosel has no such restriction. This is repeated until $10000$ realizations have been accumulated in both data sets. 

\begin{figure}\centering\label{fig:fixed_graph_experiment}
   \subfigure{\label{fig:fixed_graph}
   \begin{tikzpicture}  
      \node[] (empty) at (-3.0,0) {};
      \node[] (empty) at (+3.0,0) {};
      \node[varcon] (C) at (0,1.2) {$C$}; 
      \node[var] (X1) at (-0.6,0) {$X_1$};
      \node[varsel] (S) at (-1.2,-1.2) {$S$};
      \node[var] (X2) at (-1.8,0) {$X_2$};
      \node[var] (X3) at (-1.8,1.2) {$X_3$};
      \draw[arr] (C) edge (X1);
      \draw[arr] (X1) edge (S);
      \draw[arr] (X2) edge (S);
      \draw[arr] (X3) edge (X2);
      \node[var] (X4) at (0.6,0) {$X_5$};
      \node[var] (X5) at (0.6,-1.2) {$X_6$};
      \node[var] (X6) at (1.2,1.2) {$X_4$};
      \draw[arr] (C) edge (X4);
      \draw[arr] (X4) edge (X5);
      \draw[arr] (X6) edge (X4);
   \end{tikzpicture}}\hfill
   \subfigure{
      \label{tab:results_fixed_graph}
         \begin{tabular}{crrrrrr}
           \toprule
            & \multicolumn{3}{c}{\textbf{Unbiased Data \dnosel}} & \multicolumn{3}{c}{\textbf{Biased Data} \dsel} \\
            \cmidrule(lr){2-4}\cmidrule(lr){5-7} 
            Method & {\#Pred} & {TP} & {FP} & {\#Pred} & {TP} & {FP} \\
            \icp  & 198 & 197 & 1 & 425 & 200 & 225 \\
            \lcd  & 202 & 200 & 2 & 794 & 199 & 595 \\
            \yste & 213 & 200 & 13 & 219 & 200 & 19 \\
            \yst  & 198 & 198 & 0 & 200 & 198 & 2 \\
           \bottomrule
         \end{tabular}}
   \caption{\textbf{(Fixed Graph)} (a) Graph used for fixed-graph simulations, containing a Y-Structure $\langle C,X_4,X_5,X_6\rangle$ and a failure mode of \lcd $\langle C,X_1,X_2 \rangle$. (b) Results of applying \icp, \lcd, \yst and \yste to the $200$ random causal models with the fixed graph in (a). The total number of predicted ancestral relations ({\#Pred}), the number of true positives ({TP}) and false positives ({FP}) are given for the unbiased and biased data.}
\end{figure}

\subsubsection{Fixed Graph}

As a demonstration of the effect of the selection bias mechanism, we sample $200$ models with the directed graph in Fig.~\ref{fig:fixed_graph}. The graph contains a false positive for LCD under selection bias (see Fig.~\ref{fig:lcd_sb_b}) combined with a pattern that adheres to the (Extended) Y-Structure conditions. 

The results are shown in Tab.~\ref{tab:results_fixed_graph}, where we compare  predictions to the true ancestral causal relations in the graph. The true positive count is similar for each method for \dnosel, and both \icp and \lcd predict a large amount of false positives for \dsel, while \yst and \yste show few errors.

\subsubsection{Random Graphs}\label{sec:random_graphs}

We sample small ($p=8$) and large ($p=16$) graphs, each including an additional JCI-1 context variable, in a way that promotes spurious correlations due to selection bias (see Appendix~\ref{app:sampling_random_graphs} for the procedure).

The results are given in Fig~\ref{fig:random_graphs}. We first note a drop in precision for methods computed on \dsel compared to those using \dnosel, indicating a strong effect of introducing selection bias. For the small graphs in Fig.~\ref{fig:random_graphs_small}, \yst shows a high precision on a limited recall range, after which \icp is outperforming it and other methods. Here \yste is performing considerable worse than \yst, which was already pointed out by \citet{mooij2015empirical}. In the data without selection bias \dnosel, we find that \yst is outperforming all others, and is close to \lcd.  

In large graphs (Fig.~\ref{fig:random_graphs_large}), we see that \yst is outperforming all other methods in both \dsel and \dnosel setups. \yste shows a large recall with a drop in precision compare to \yst. We find that \icp computed with \dsel is considerably less successful when compared to \dnosel, a larger difference than found for the small graphs. The large recall of \lcd might indicate that additional modes as in Fig.~\ref{fig:lcd_sb_c} are created when the selection mechanism is enabled.

We show results for additional experiments in Appendix~\ref{app:random_graphs_sample_sizes}, where in one experiment we compare predictions for \lcd, \yst and \yste patterns against the true patterns as existing in the sampled graph, and where we vary the sample size.

\begin{figure}\centering
   \subfigure[Small graphs ($p=8$)]{\label{fig:random_graphs_small}
      \includegraphics[trim=0 0 0 50, clip, width=\twofigurewidth]{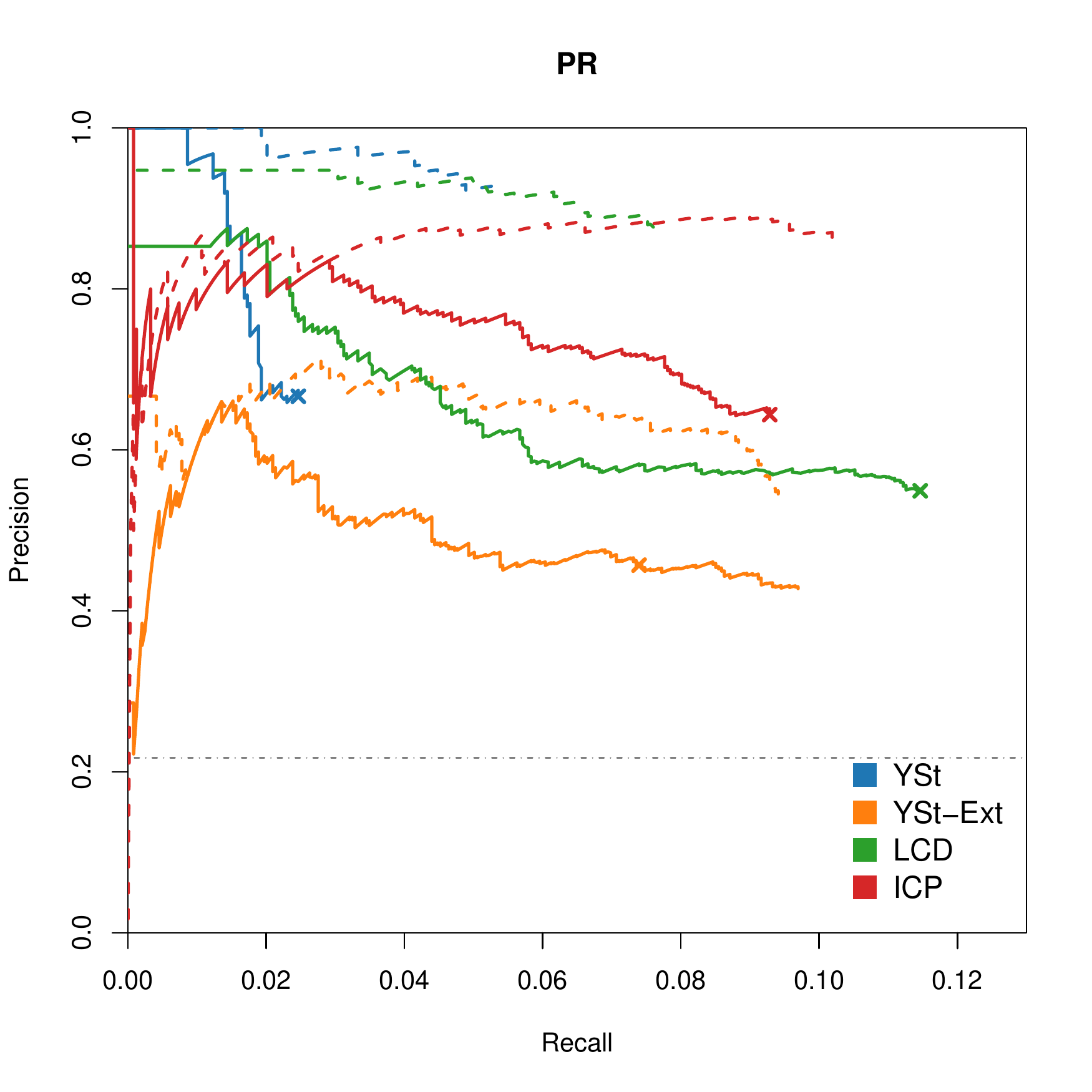}}   
   \subfigure[Large graphs ($p=16$)]{\label{fig:random_graphs_large}
      \includegraphics[trim=0 0 0 50, clip, width=\twofigurewidth]{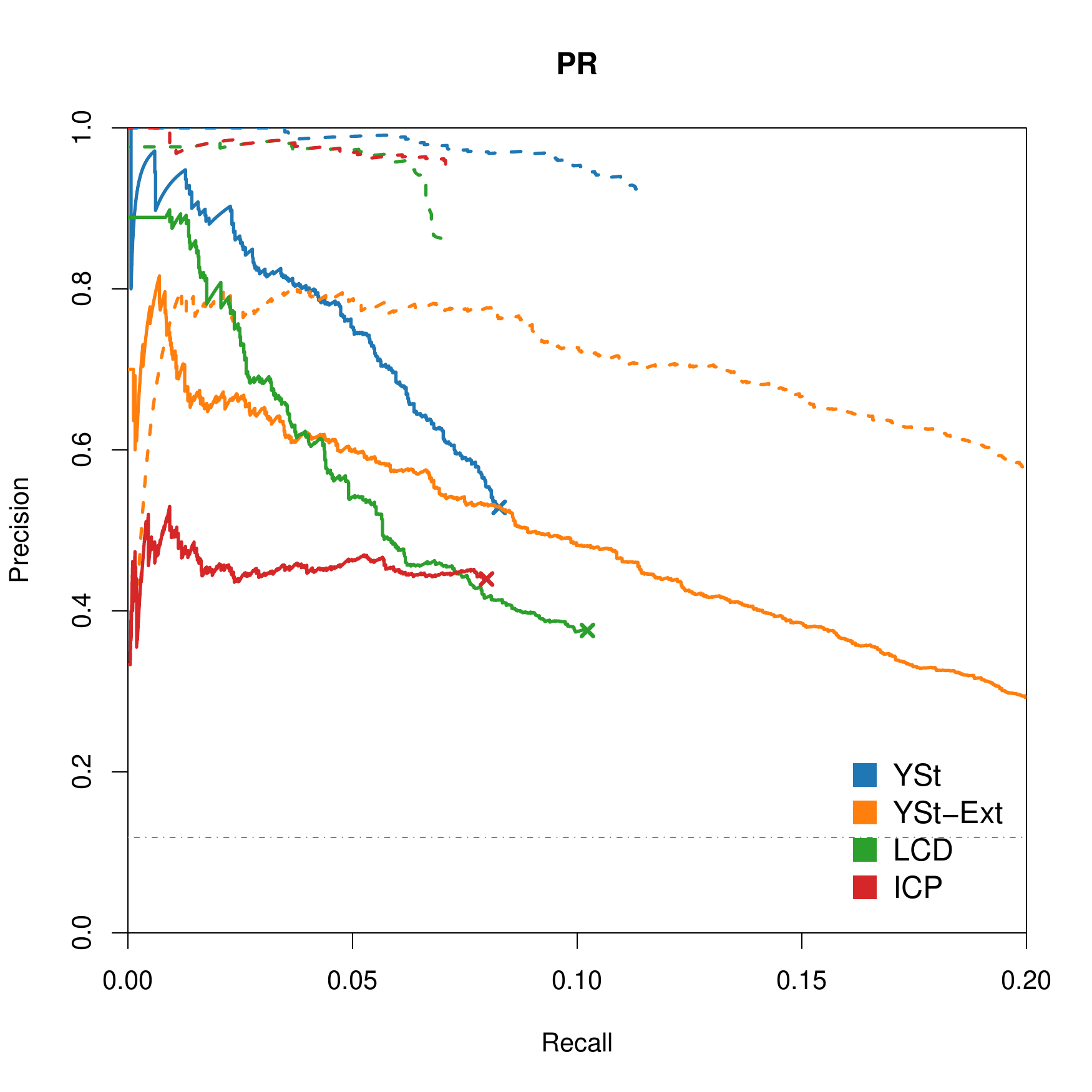}}
   \caption{\textbf{(Random Graphs)} PR curves for experiments with random graphs of system size $p=8$ and $p=16$. Solid lines indicate performance using data under selection bias (\dsel), while dashed lines show the performance for data where the selection mechanism is disabled (\dnosel). A cross indicates the threshold where a score that is equivalent to a $p$-value of $0.01$ is reached.}
   \label{fig:random_graphs}
\end{figure}

\subsection{Real-World Data}

We use real-world microarray data of the yeast genome~\citep{kemmeren2014large}. Gene expression levels are captured for each of $p=6179$ genes under $n=262$ observational and $m=1479$ interventional conditions. In each of the latter, one gene has been knocked-out, and expression levels are measured once for all variables. We combine the samples into a single dataset by adding a binary JCI context variable $C$, where $C=0$ ($C=1$) corresponds to all observational (interventional) samples.

We operate here in a high-dimensional setting ($p \gg m > n$), where statistical procedures typically require a form of regularization. Hence, we preselect several variables for \lcd, \yst and \yste using $L_2$-boosting regression, and we reduce the search space of the Y-Structures by fixing the auxiliary variable $V$ in \eqref{eq:ystr} as the context $C$. Details on the practical estimators in this regime are found in Appendix~\ref{app:methods_real_world_data}.

\subsubsection{Internal cross-validation}\label{sec:exp_internal_val}

We use the microarray data in a causal cross-validation setting, where the combined observational and interventional dataset is partitioned into $5$ equal parts. In a sequence of $5$ experiments, one part is used as a test set and the remaining $4$ parts are merged as training data. 

The ground truth set of `true' causal relations is computed from the test set in the following way. Let $X_{j;i}$ be the single sample of the expression of $X_j \in \bm{V}$ under the intervention of $X_i \in \bm{V}$, where $i$ and $j$ take values in $[p] = \{1, \dots, p\}$. Following \citet{versteeg2019boosting}, we compute a score $S_{ij}$ representing the size of the absolute effect of a single intervention as found in the data $S_{ij} = \frac{| X_{j;i} - \mu_j |}{\sigma_j}\mathrm{,}$
where $\mu_j$ and $\sigma_j$ are the empirical mean and empirical standard deviation of the observational test data for $X_j$. The ground truth set for ancestral causal relations between $X_i$ and $X_j$ is then simply $\left\{(i,j) \in \left([p] \times [p] \right) \given S_{ij} > t \land i \neq j \right\}\mathrm{,}$
for some preset value $t$.

In Fig.~\ref{fig:kem_v_kem}, the resulting ROC curve is shown, where $t$ is chosen such that $1$ percent of all potential causal relations are contained in the ground truth. We find that \icp here is the most successful, while all other methods significantly outperform random guessing (in gray). The \yste method is seen to either outperform or match \lcd across the range, implying that the using the extra (in)dependence tests involving an additional variable leads to an improvement. We find that \yst, which differs from \yste only in the requirement for additional constraints, is seen to perform worse than \yste at all specificity levels. Seemingly, relatively more true positive patterns are discarded by \yst, resulting in $96$ predicted causal relations compared to a recall of $320$ for \yste. 

\begin{figure}\centering
   \subfigure[Internal validation]{
      \label{fig:kem_v_kem}
      \includegraphics[width=\twofigurewidth]{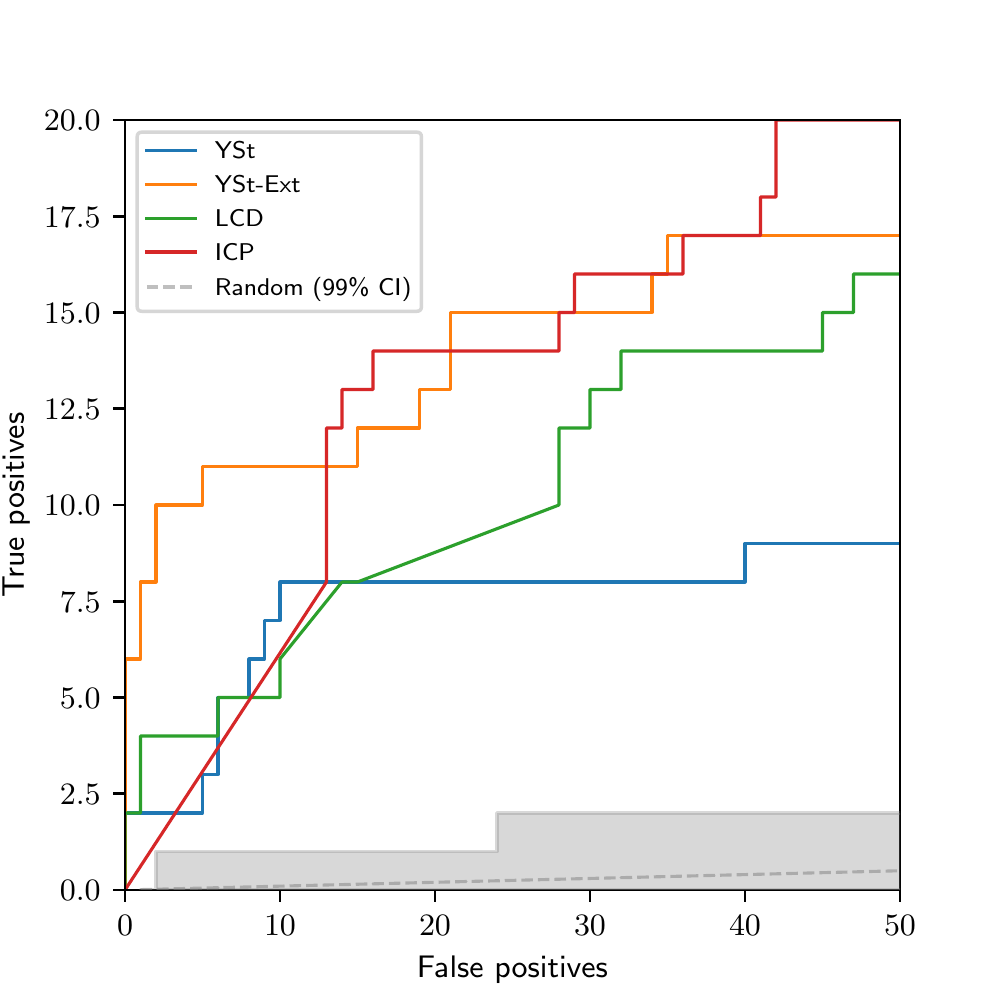}}
   \subfigure[Domain validation]{
      \label{fig:kem_v_sgd}
      \includegraphics[width=\twofigurewidth]{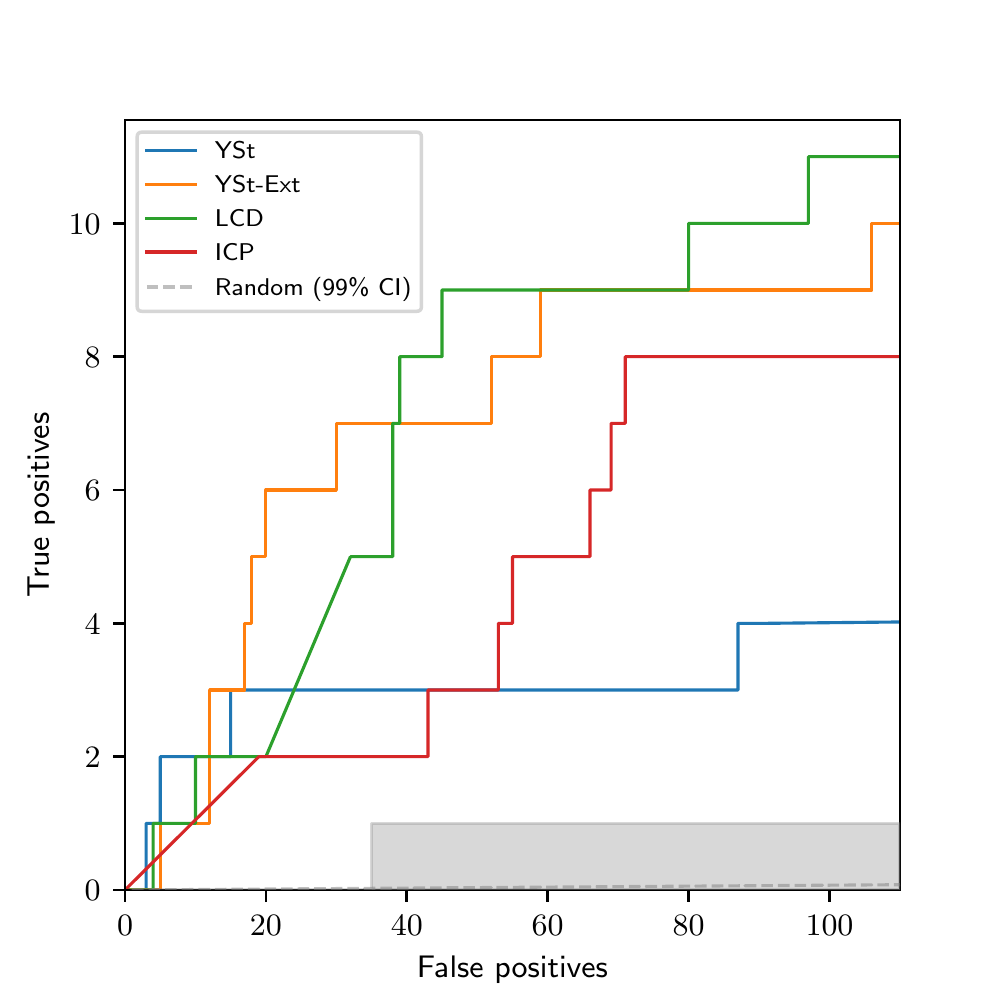}}
   \caption{\textbf{(Real-World data)} ROC curves for experiments with real-world microarray data. The gray line represents random guessing and the gray area represents its $99$ percent confidence interval.}
\end{figure}

\subsubsection{Domain validation}

We compile a ground truth of domain knowledge by querying genetic relations from an online compendium of expression data~\citep{cherry2012saccharomyces} and orienting the causal direction from `hit' to `bait', as in \citet{meinshausen2016pnas}. Compared to the internal validation, here we include all data in the training set. 

The resulting ROC curves are shown in Fig.~\ref{fig:kem_v_sgd}. We find that  that \yste outperforms \icp and \lcd over a large portion of the false positive rate. The \yst method again performs worse than other methods for most its range. Compared to the internal validation, \yst and \icp seem less robust than \yste, and \lcd is performing better in this setting relative to Fig.~\ref{fig:kem_v_kem}.

\section{Conclusion and Discussion}

In this work, we have investigated local constraint-based algorithms when selection bias is present, possibly in addition to latent confounding and cycles. In these conditions, the LCD method was shown to possibly produce wrong predictions, while also possibly increase recall. Y-Structure type patterns were shown to be sound in predicting ancestral causal relations from data with selection bias. Empirically, we showed that the Y-Structures method works well in some simulation settings using a finite-sample scoring method, while the Extended version has performed poorly. In a real-world setting, we have found that the Extended Y-Structures algorithm is outperformed only by ICP when evaluated against gene expression experiments. There, the Extended Y-Structure variant is shown to be more robust when compared to an external dataset, indicating that the hypothesis of a selection bias mechanism underlying the data is plausible.

Regarding future work, a sound criterion for reading off causal relations from partial ancestral graphs under selection bias is still desired. Meanwhile, our approach, where we investigate small variable sets for soundness under selection bias, can be expanded to higher cardinalities. Approaches such as brute-force searching for valid mixed graphs or a more principled procedure similar to Prop.~\ref{prop:ext_ystruc} can also be considered. Finally, the question to what extent background knowledge, such as a JCI assumptions or assumptions on the causal relations between selection variables and other variables, aids in constraint-based causal discovery under selection bias, is left for future work.

\acks{PV and JMM are supported by NWO, the Netherlands Organization for Scientific Research (VIDI grant 639.072.410).}

\bibliography{main.bib}

\appendix

\section{$\sigma$-Separation}\label{app:sigma_separation}

We first need some additional definitions. A \emph{directed path} from node $X$ to node $Y$ is a path such that all edges on the path are directed and into $Y$. A \emph{directed cycle} is a directed path from $X$ to $Y$ where also $Y \to X$. The \emph{strongly connected component} of $X$ is defined as $\scc{X} = \an{X} \cup \de{X}$, and hence contains all nodes on directed cycles that include $X$. 

\begin{definition}[$\sigma$-separation \citep{forre2017markov}] 
A walk between $X$ and $Y$ in a graph $\mathcal{G} = (\bm{V},\bm{E})$ is \emph{$\sigma$-blocked} by 
$\bm{C} \subseteq \bm{V}$ if one of the following conditions hold.
   \begin{itemize}
      \itemsep0em
      \item $X$ or $Y$ is in $\bm{C}$.
      \item The walk contains a collider that is not in $\an{\bm{C}}$.
      \item The walk contains a non-collider $V \in \bm{C}$ that points to an adjacent node on the walk in another strongly connected component.
   \end{itemize}
If all paths between $X$ and $Y$ are $\sigma$-blocked by $\bm{C}$, then $X$ is \emph{$\sigma$-separated} from $Y$ by $\bm{C}$. 
\end{definition}
In addition, we refer to a walk or path as \emph{$\sigma$-open} (or just `open') given $\bm{C} \subseteq \bm{V}$ if it is not $\sigma$-blocked by $\bm{V}$.

\section{LCD Proof}\label{app:proof_lcd}

We give the proof of LCD without select bias.
\addtocounter{theorem}{-5}
\begin{proposition}[LCD]
   Let $\langle C,X,Y \rangle$ be an ordered tuple of disjoint variables in a DMG of a simple and faithful SCM, where $C$ is a JCI-1 context variable. If
   \begin{equation}
      C \CI Y \given [X]
   \end{equation}
   then $X \in \an{Y}$, $Y \notin \an{X}$, $X$ and $Y$ are unconfounded and $\Prob\left(Y \given \inter(X) \right) = \Prob\left(Y \given X\right)$.
\end{proposition}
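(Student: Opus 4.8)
The plan is to unpack the minimal-independence notation, apply Lemma~\ref{lemma:loci_rules} for the ancestral claim, and then settle the remaining three claims by a short $\sigma$-separation path analysis on a three-node reduction of the graph. First, by definition~\eqref{eq:def_minsep}, the hypothesis $C \CI Y \given [X]$ unpacks into $C \CI Y \given X$ together with the marginal dependence $C \nCI Y$. Applying \eqref{eqn:loci_indep} to $C \CI Y \given [X]$ gives $X \in \an{C \cup Y}$ directly. Since $C$ is a JCI-1 context variable and $X$ a system variable, exogeneity yields $X \notin \an{C}$, so that $X \in \an{Y}$; this settles the first claim.

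Next, mirroring the device used in the proof of Prop.~\ref{prop:ext_ystruc}, I would marginalize out every variable other than $\{C,X,Y\}$. Because latent projection preserves ancestral relations and can only add (never delete) bidirected edges, it suffices to establish $Y \notin \an{X}$ and the absence of a bidirected $X \oto Y$ edge in the resulting three-node marginal graph. In that graph, $C \CI Y \given X$ forces $C$ and $Y$ to be non-adjacent, while $C \nCI Y$ together with $X \notin \an{C}$ forces an edge between $C$ and $X$ carrying an arrowhead at $X$. Hence the open $C$--$Y$ walk witnessing $C \nCI Y$ must route through $X$ in the form $C \sto X \to Y$.

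I would then analyze when conditioning on $X$ $\sigma$-blocks this walk. Here $X$ is a non-collider (arrowhead in from $C$, tail out to $Y$), so under $\sigma$-separation it blocks only when it points to an adjacent node of the walk lying in a \emph{different} strongly connected component. Since $X$ points to $Y$, blocking requires $Y \notin \scc{X}$, which combined with $X \in \an{Y}$ is exactly $Y \notin \an{X}$, giving the second claim. For unconfoundedness I would argue by contradiction: an edge $X \oto Y$ would make $X$ a \emph{collider} on the walk $C \sto X \oto Y$, which conditioning on $X$ leaves open, contradicting $C \CI Y \given X$; hence no such edge exists. Finally, the interventional identity follows structurally, since with $C$ and $Y$ non-adjacent and no $X \oto Y$ edge there is no open back-door path from $X$ to $Y$, so intervening on $X$ coincides with conditioning on it, i.e.\ $\Prob(Y \given \inter(X)) = \Prob(Y \given X)$.

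The main obstacle is the $\sigma$-separation bookkeeping in the cyclic case. Under ordinary $d$-separation a conditioned non-collider always blocks, so the argument for $Y \notin \an{X}$ would be immediate; but allowing cycles means I must invoke the sharper $\sigma$-separation rule---that a conditioned non-collider blocks only if it points outside its own strongly connected component---precisely to rule out the scenario in which $X$ and $Y$ share a cycle. Getting this orientation-and-SCC analysis exactly right, rather than the routine application of Lemma~\ref{lemma:loci_rules}, is where the care is needed.
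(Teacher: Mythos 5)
Your proof is correct and takes essentially the same route as the paper's own proof: apply Lemma~\ref{lemma:loci_rules} together with JCI-1 exogeneity to obtain $X \in \an{Y}$, then analyze the three-node path $C \sto X \to Y$, ruling out collider configurations at $X$ to conclude $Y \notin \an{X}$, unconfoundedness, and the interventional identity. The only difference is that you make explicit two steps the paper leaves implicit---the marginalization onto $\{C,X,Y\}$ (borrowed from the proof of Prop.~\ref{prop:ext_ystruc}) and the $\sigma$-separation requirement that the conditioned non-collider $X$ point outside its strongly connected component, which yields $Y \notin \scc{X}$ in the cyclic case---so your write-up is a more careful rendering of the same argument rather than a different one.
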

\begin{proof}
   Equation~\eqref{eqn:lcd} correspond to a minimal independence \eqref{eq:def_minsep} for $X$, such that $X \in \an{C \cup Y}$ by Lemma~\ref{lemma:loci_rules}. Now $C$ and $Y$ are $\sigma$-connected because $C \nCI Y$, but separated given $X$, and thus the path $(C,X,Y)$ forms a non-collider on $X$. Without selection bias, and as $X \notin \an{C}$, the edge between $C$ and $X$ is either bidirected or directed into $X$, such that the end-point in $X$ is an arrowhead. As we cannot have a collider on $X$ this excludes both $Y \to X$ and $X \oto Y$. Thus, the graph must be of the form $C \sto X \to Y$. This implies that $\Prob\left(Y \given \inter(X) \right) = \Prob\left(Y \given X \right)$.
\end{proof}

\section{Estimators and Implementation}\label{app:estimator_details}

In general, for pattern-based approaches as \lcd, \yst and \yste, a search for tuples of variables satisfying the relevant constraints is performed over all possible combinations in the variable set. The constraints are checked with a user-specified conditional independence test against a user-specified $p$-value threshold, and, if all hold, the relevant score is computed and returned. The practical estimators for simulations and real-world data are given later in this section, and we first detail the \icp baseline method.

\subsection{Invariant Causal Prediction}\label{app:icp}

ICP predicts direct causes $\pa{Y}$ of a target variable $Y$ by testing (in a sophisticated way) if the conditional distribution $\Prob\left(Y | \pa{Y}\right)$ remains invariant under changes of an environment (context) variable. In \citet{mooij2020joint}, ICP is reformulated as predicting ancestral relations by assuming faithfulness. In that formulation, ICP is sound under latent confounding and cycles but not selection bias, similar to LCD. 

For the practical estimator \icp, we use its standard implementation in the \texttt{Invariant-}
\texttt{CausalPrediction} package in \texttt{R} with the default parameters. In practice, this results in preselecting a potential parent set with a $L_2$-boosting regression if $p > 8$, and in the application of the following \emph{mean-variance test}, referred to as the `approximate test' in \citet{peters2016causal}. It tests for a given potential parent set, for each of the realizations of $c \in C$ of the context (environment) $C$  if the mean of the residuals of a linear regression differs from the mean of the residuals of a linear regression in all other contexts $C \setminus \{c \}$. These $p$-values  for all $c \in C$  are combined with a Bonferonni correction. This procedure is repeated for the means of the variances using an $F$-test. Finally, these two $p$-values are also combined with a Bonferonni correction. 

\subsection{Estimators for the Simulated Data}\label{app:methods_simulated_data}

In the simulated data we use a standard partial correlation test for \lcd, \yst and \yste where the $p$-value threshold $\alpha$ for rejecting the null hypothesis is set to $\alpha = 0.01$. We `accept' the null hypothesis of independence for $p$-values above his threshold.

For \icp, the standard implementation in the \texttt{InvariantCausalPrediction} package requires a discrete context variable, and we discretize $C$ into binary outcomes around its mean value. For the simulations where $p=16$, we override the default settings so that no preselection is performed.

\subsection{Estimators for the Real-World Data}\label{app:methods_real_world_data}

We use the mean-variance test as described above for testing any (conditional) independence in \lcd, \yst and \yste between the single context variable $C$ and any system variable $\bm{V} \setminus C$. In the other cases, a standard partial correlation test is used. We use two thresholds, accepting the independence hypothesis for a $p$-value threshold $\alpha$ of $0.01$, rejecting the null for a lower threshold of $\alpha / p$, where $p$ is again the number of variables. 
 
\paragraph{Preselection and High-Dimensionality}

We operate here in a high-dimensional setting ($p \gg m > n$), where statistical procedures typically require a form of regularization.

We use $L_2$-boosting regression~\citep{buhlmann2003boosting} for each target variable $Y$ as a preselection to \lcd~\citep{versteeg2019boosting}. Here, up to $8$ variables are selected by applying the \texttt{GLMBoost} routine in the \texttt{MBoost} package in \texttt{R}. Essentially this reduces the search for each $X \in \bm{V}$ in the LCD triple $\langle C, X, Y\rangle$ to a potential parent set $X \in \bm{V}_{Y}^{\mathrm{sel}} \subsetneq \bm{V}$, where $\bm{V}_Y^{\mathrm{sel}}$ is the set of covariates selected by a $L_2$-boosting regression for $Y$. The \icp implementation uses a similar preselection technique. For the Y-Structures estimators, we fix the auxiliary variable $V$ to be the single context variable $C$. Similar to \lcd, we reduce the search space of both $X$ and $W$ in each $\langle C,W,X,Y \rangle$ pattern to the preselected variable sets $X \in \bm{V}_{Y}^{\mathrm{sel}}$ and $W \in \bm{V}_{X}^{\mathrm{sel}}$ respectively. 

To further improve stability of the predictions in this setting~\citep{meinshausen2010stability}, we bootstrap each method for $100$ random subsamples of the data, we use average of the score over all bootstrap samples as the final estimator.

\section{Sampling Random Graphs}\label{app:sampling_random_graphs}

We sample random graphs with small graphs with $p=8$ and large graphs with $p=16$ system variables, and one additional context variable each. The following procedure is repeated for different random seeds until a graph is found. 

The directed edges between each node pair are sampled independently with a fixed probability of $0.15$ for $p=8$ and $0.09$ for $p=16$. Cyclic graphs and graphs that do not meet a predetermined minimum number of collider patterns dependent are discarded, to increase the prevalence of spurious correlations due to selection bias. We set this parameter to $3$ for $p=8$ and $5$ for $p=16$. For the parents of the selection bias variable, we uniformly sample $1$ and $3$ variables for $p=8$ and $p=16$ respectively from the set of leafs of descendants of all colliders nodes. Finally, we randomly pick with uniform weight one of the source nodes of the graph as a context variable.

\section{Additional Random Graph Experiments}\label{app:random_graphs_sample_sizes}

Here we include two more experiments on small ($p=8$) and large ($p=16$) random graphs, extending the results in Sec.~\ref{sec:random_graphs}.

\subsection{Oracle Patterns}

We first show results for an experiment where we compare to a ground truth of oracle independence patterns. For each predicted causal relation produced by one of the \lcd, \yst and \yste estimators, we check for each of its associated patterns (which may be multiple patterns for each predicted causal relation) if that independence pattern exists in the true graph. The existence of such a pattern is used as the positive condition in the construction of the PR curve, shown in Fig.~\ref{fig:oracle_patterns}. Here the score \eqref{eqn:finite_score} is used to score predictions, without taking the maximum over all discovered patterns. For small graphs, we find that for \dsel, highly confident predictions as produced by \yst are often true patterns found in the true graph, indicated by a high precision at the top of the ranking for a low recall. For \dnosel it is similar in precision levels to \lcd. For graphs with $p=16$, for only the lowest recall \yst outperforms \lcd, after which \lcd does better. In both sets of graphs, \yste shows by far the worst performance. 

\begin{figure}\centering
   \subfigure[Small graphs ($p=8$)]{\includegraphics[trim=0 0 0 50, clip, width=\twofigurewidth]{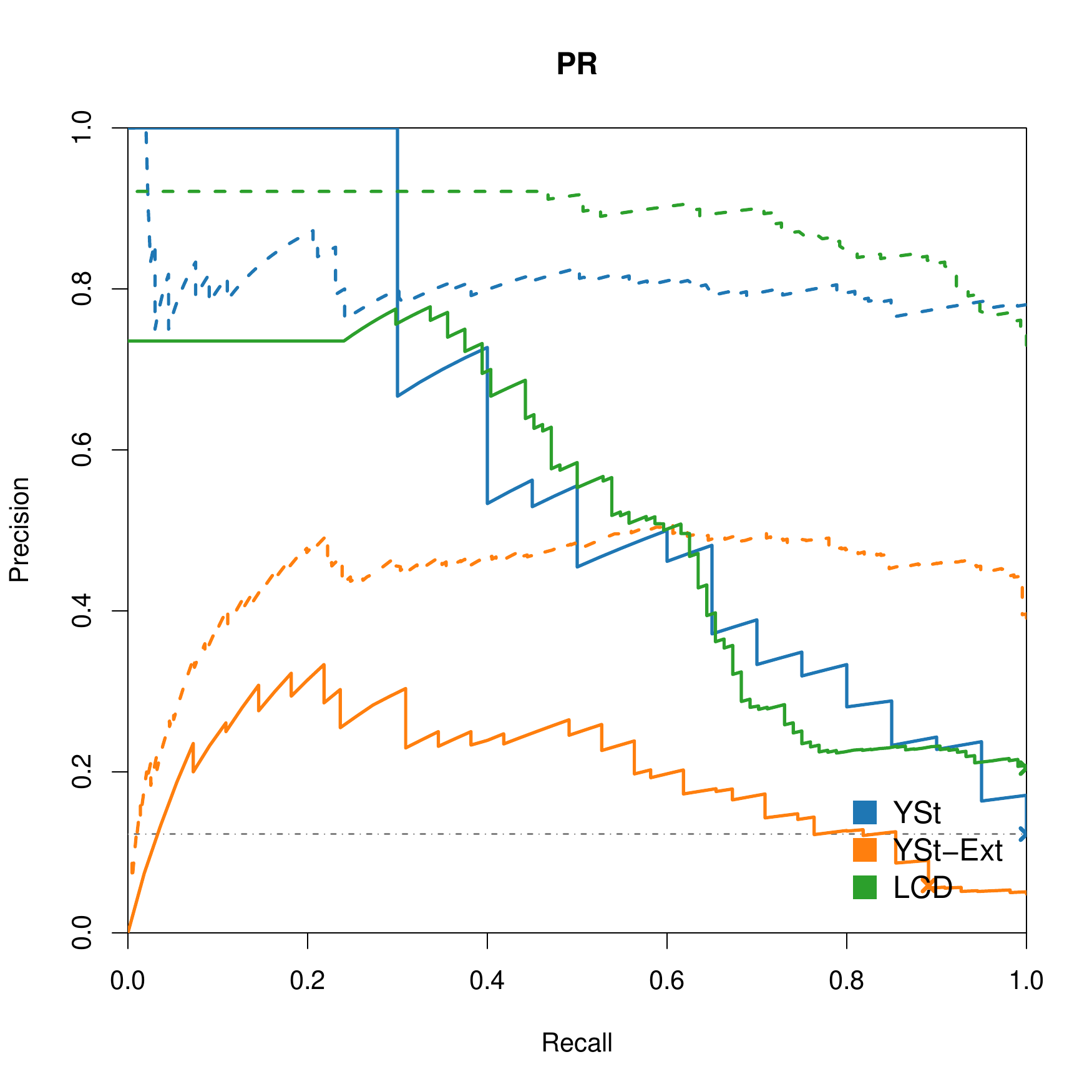}}\hspace{4em}
   \subfigure[Large graphs ($p=16$)]{\includegraphics[trim=0 0 0 50, clip, width=\twofigurewidth]{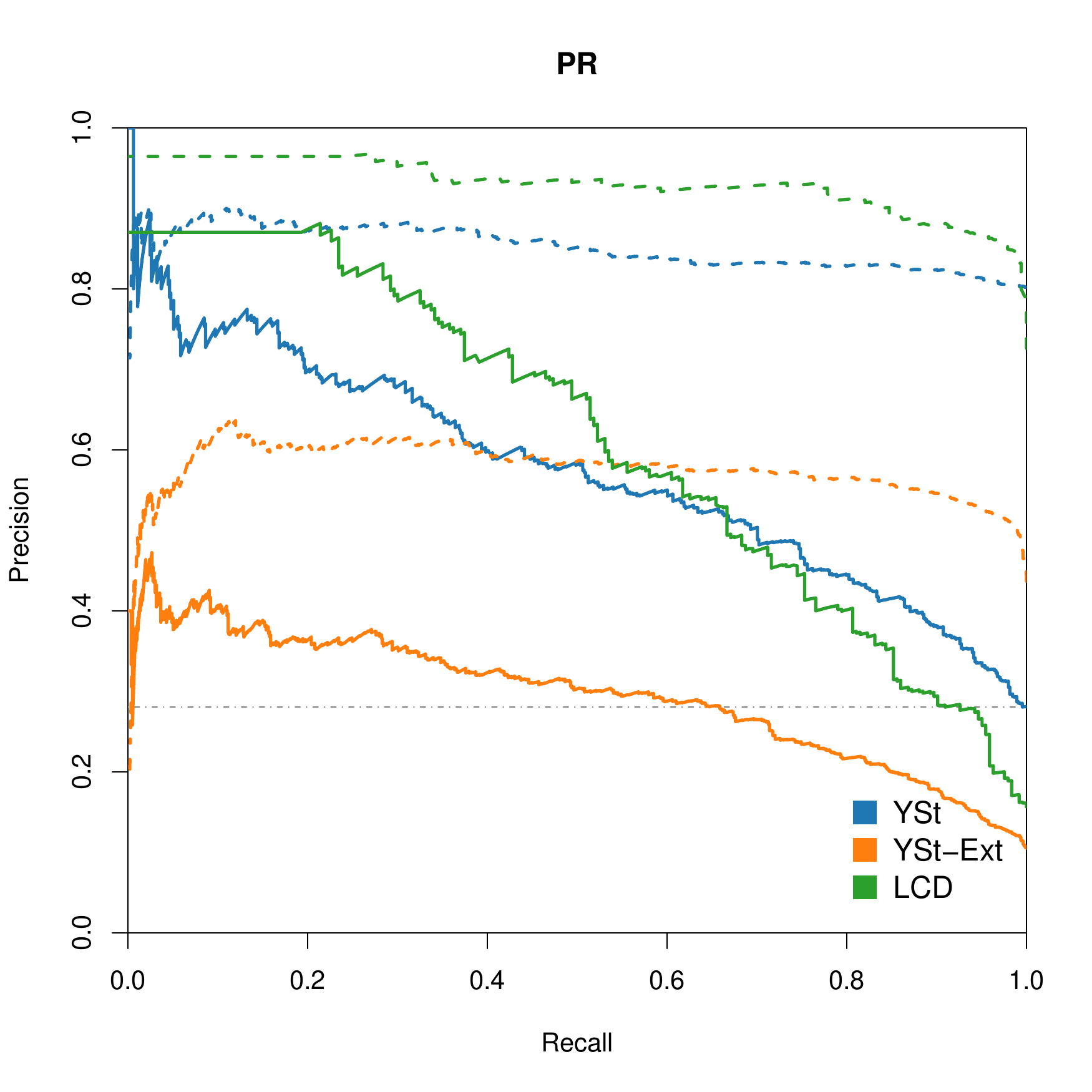}}
   \caption{\textbf{(Random Graphs, Oracle Patterns)} PR curves for experiments with random graphs of system size $p=8$ and $p=16$, where the condition positive is if the oracle conditional independence pattern found in the true graph. Solid lines indicate performance using data under selection bias (\dsel), while dashed lines show the performance for data where the selection mechanism is disabled (\dnosel).}\label{fig:oracle_patterns}
\end{figure}

\subsection{Varied Sample Size}

We perform experiments with random graphs, which are described in Sec.~\ref{app:sampling_random_graphs}, where we vary the total sample size $n$.
In Fig~\ref{fig:random_graphs_sample_sizes}, PR curves for experiments with small random graphs ($p=8$) and large random graphs ($p=16$) are found. In each row, the number of samples $n$ are varied, ranging between $n=1000$ and $n=20000$.

Generally, \lcd performs relatively better at smaller $n$, and the precision of \yst and \yste increases with sample size. We find that overall \yst shows the worst results, but it performs better on \dsel than on \dnosel for $n=1000$ at $p=6$, while this effect reverses for the other cases. \yste outperforms \lcd in most cases at small recall, and this effect is stronger for larger samples.

\begin{figure}\centering
   \subfigure[$p=8$, $n=1000$]{\includegraphics[trim=0 0 0 50, clip, width=0.315\textwidth]{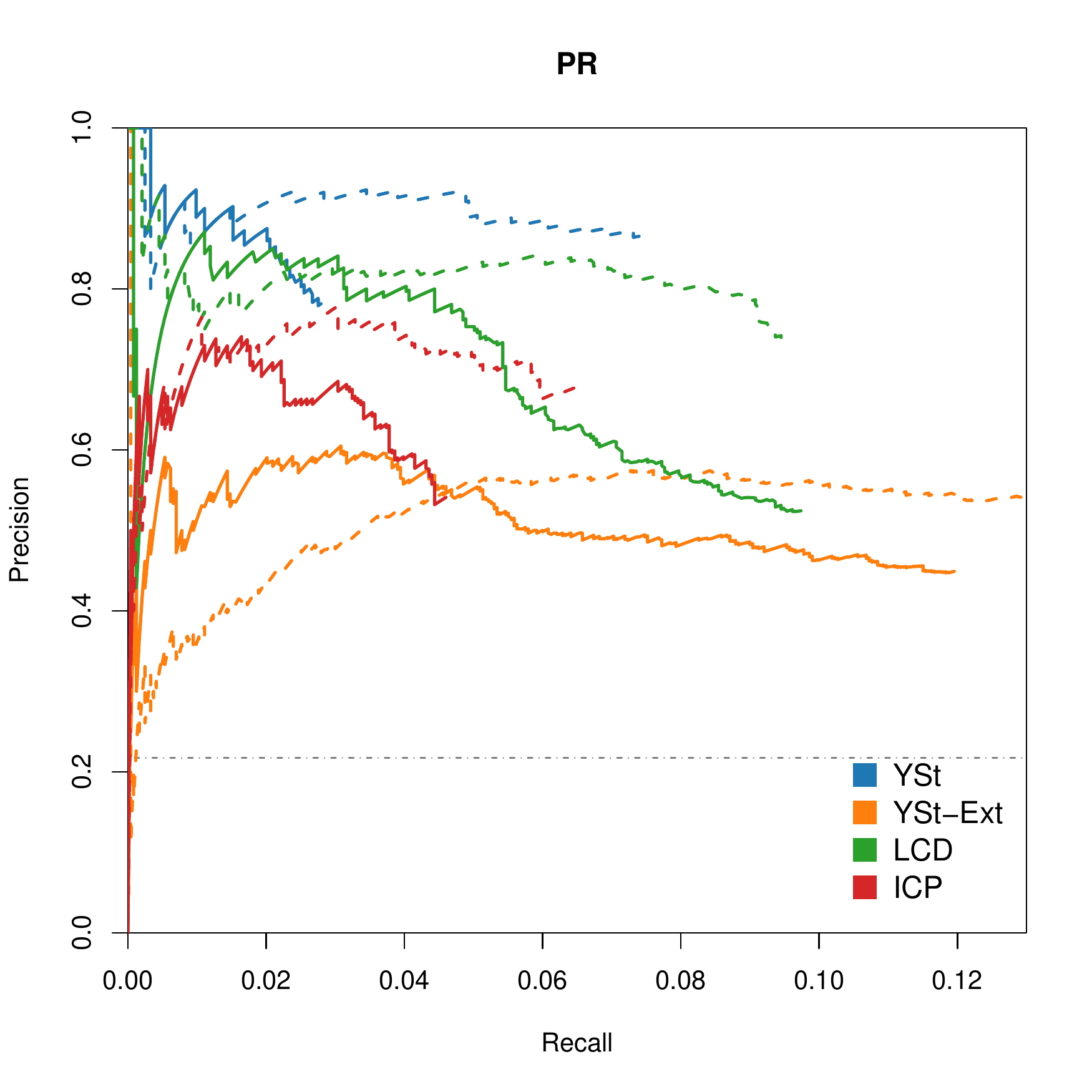}}\hspace{4em}
   \subfigure[$p=16$, $n=1000$]{\includegraphics[trim=0 0 0 50, clip, width=0.315\textwidth]{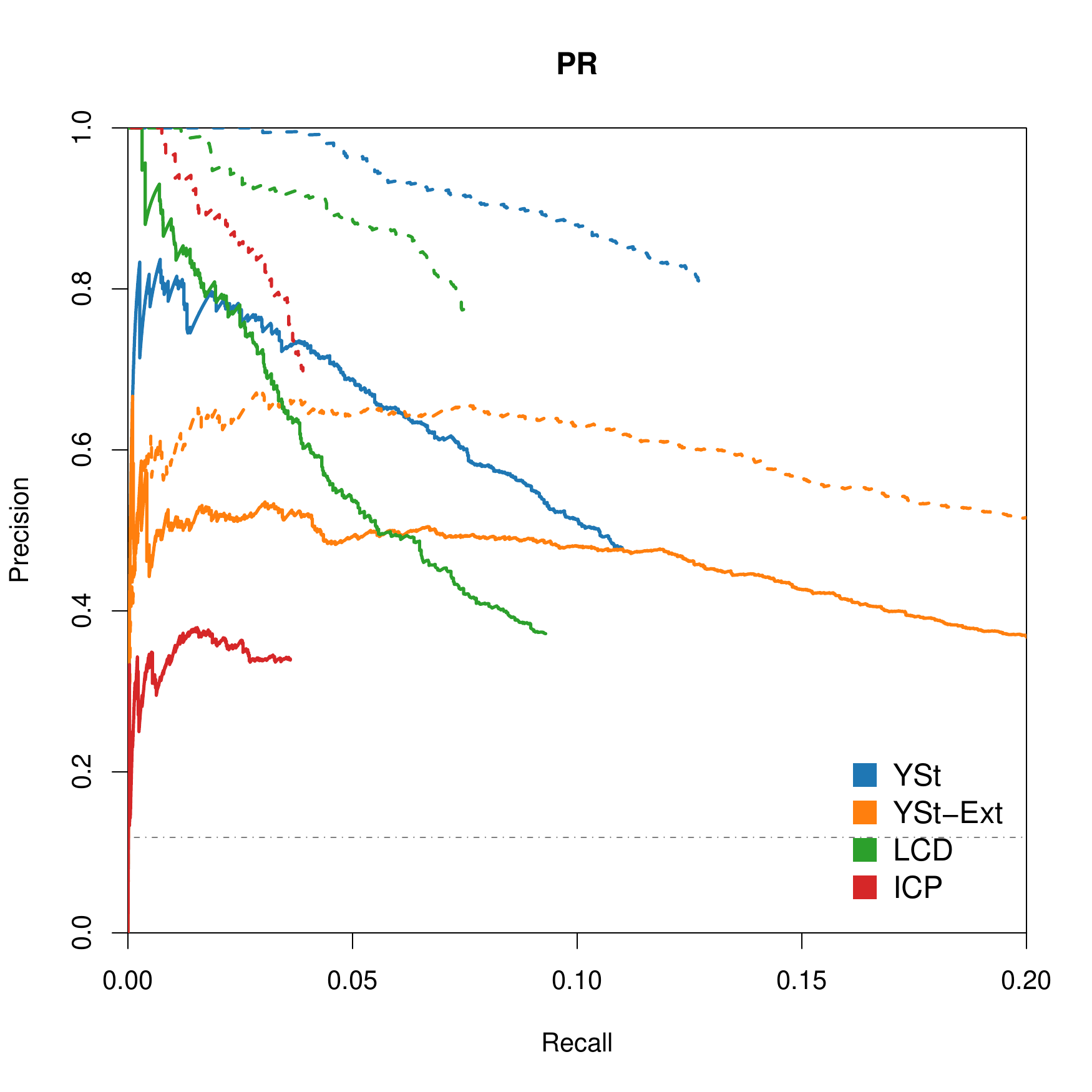}}
   \subfigure[$p=8$, $n=5000$]{\includegraphics[trim=0 0 0 50, clip, width=0.315\textwidth]{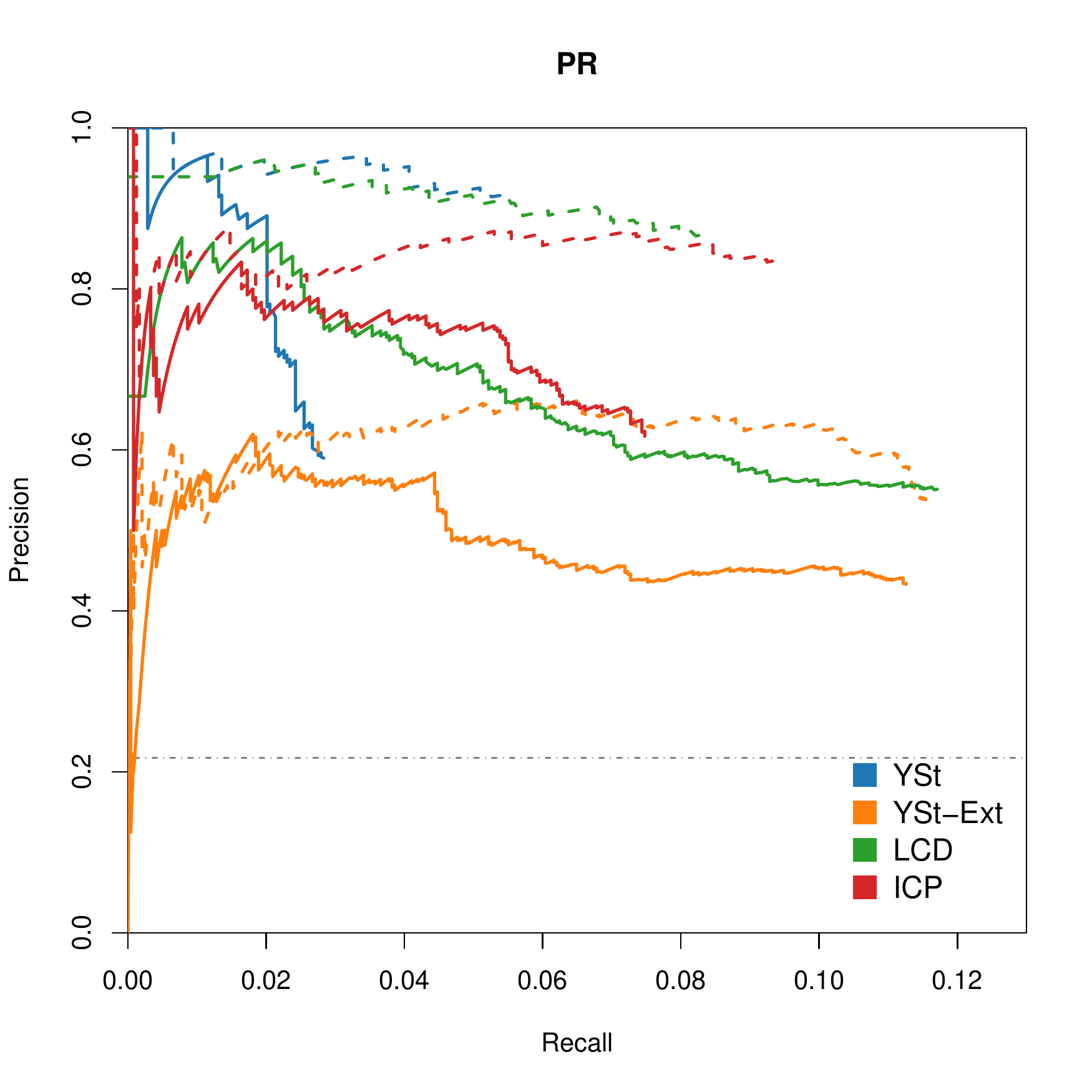}}\hspace{4em}
   \subfigure[$p=16$, $n=5000$]{\includegraphics[trim=0 0 0 50, clip, width=0.315\textwidth]{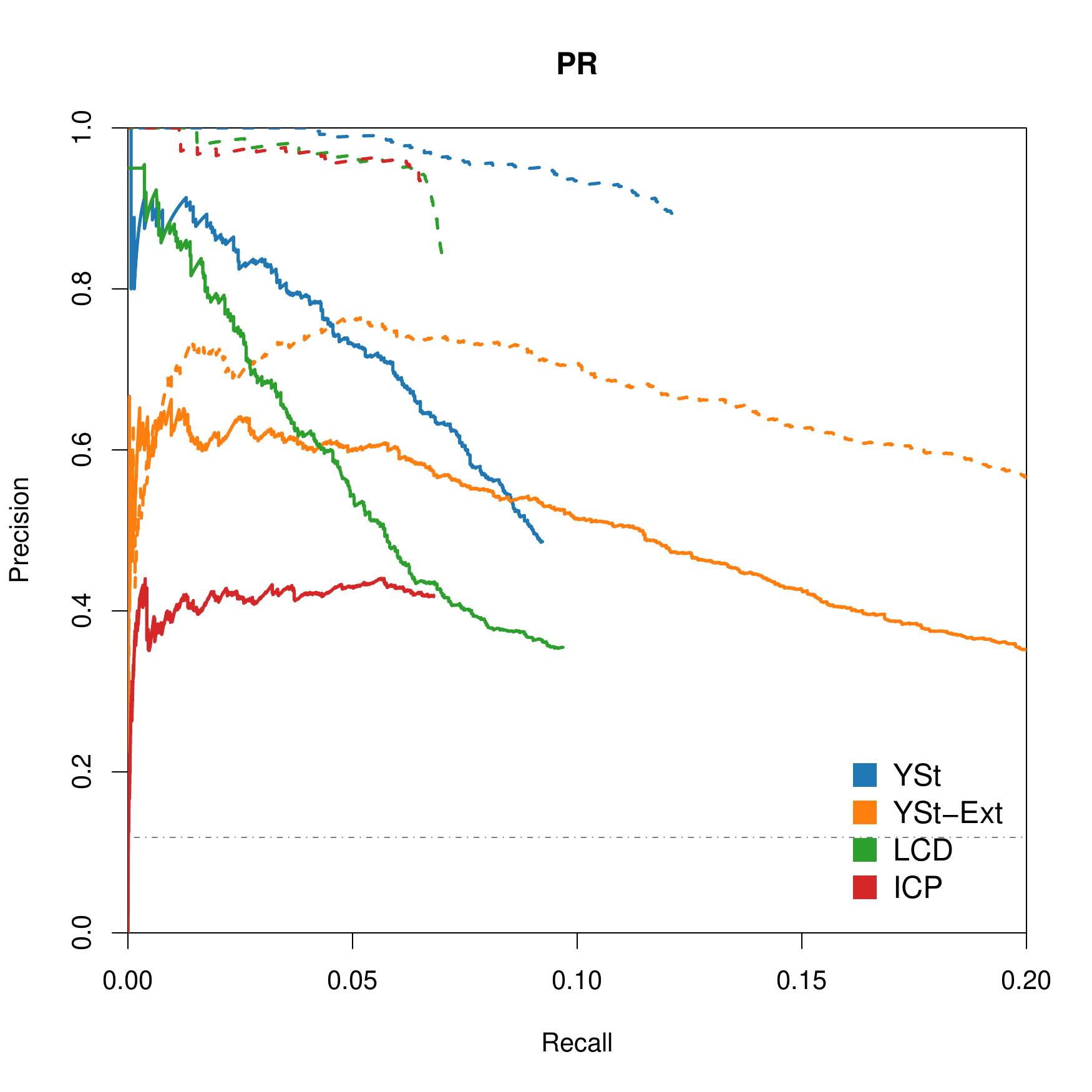}}
   \subfigure[$p=8$, $n=10000$]{\includegraphics[trim=0 0 0 50, clip, width=0.315\textwidth]{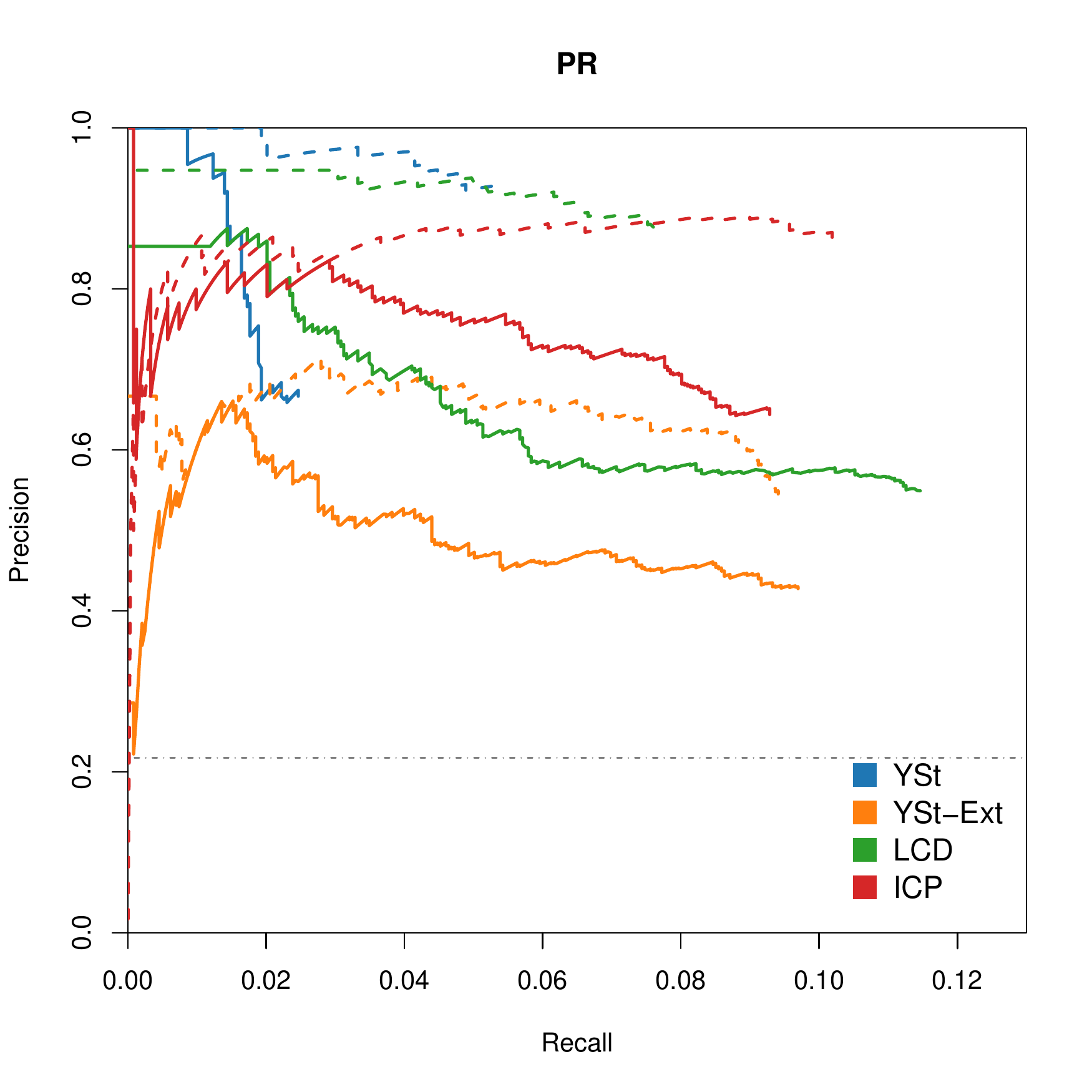}}\hspace{4em}
   \subfigure[$p=16$, $n=10000$]{\includegraphics[trim=0 0 0 50, clip, width=0.315\textwidth]{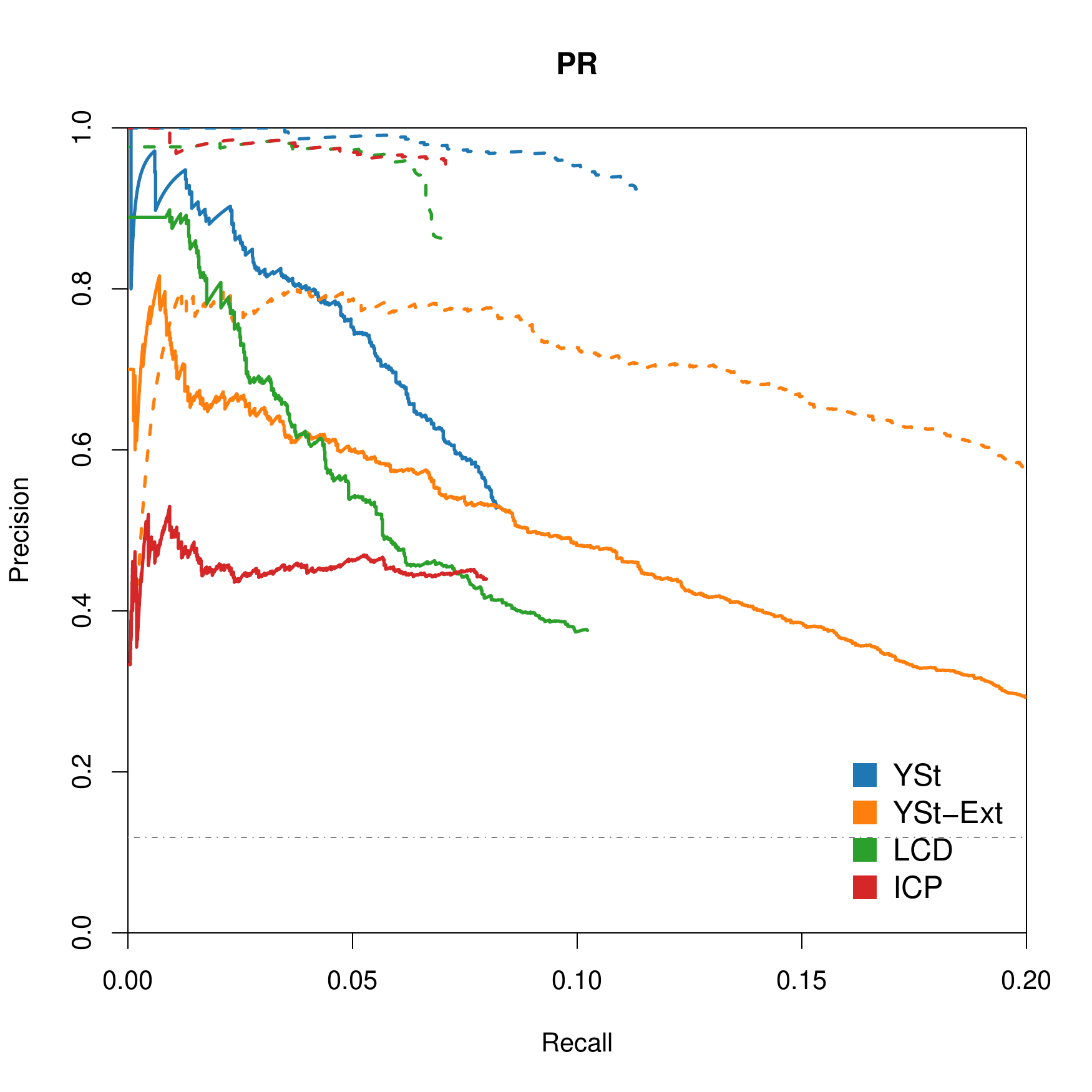}}
   \subfigure[$p=8$, $n=20000$]{\includegraphics[trim=0 0 0 50, clip, width=0.315\textwidth]{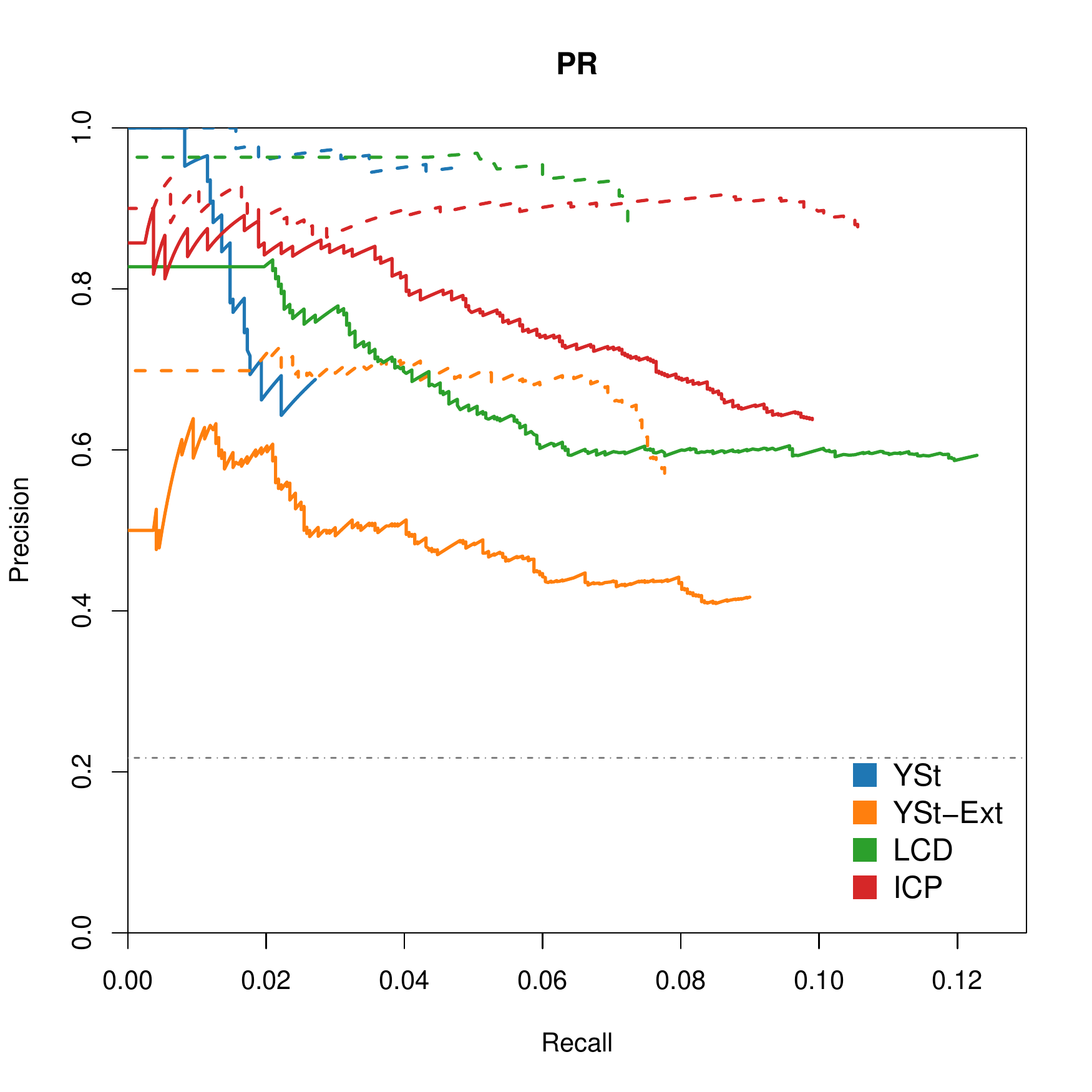}}\hspace{4em}
   \subfigure[$p=16$, $n=20000$]{\includegraphics[trim=0 0 0 50, clip, width=0.315\textwidth]{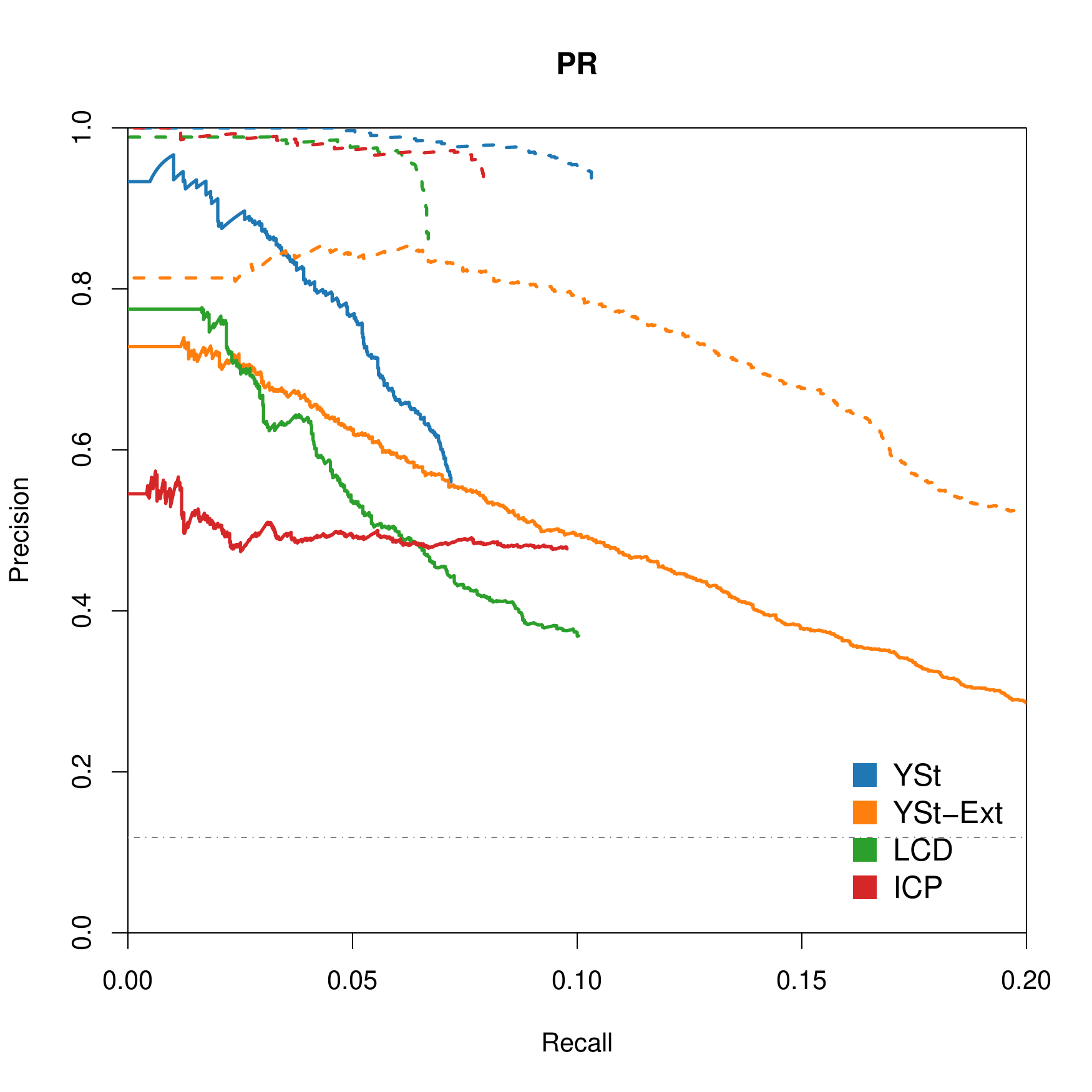}}
   \caption{\textbf{(Random Graphs, varied $n$)} PR curves for experiments with small ($p=8$) and large ($p=16$) random graphs for various sample sizes. Solid lines indicate performance using data under selection bias \dsel, while dashed lines show the performance for data where the selection mechanism is disabled \dnosel.}\label{fig:random_graphs_sample_sizes}
\end{figure}

\end{document}